\begin{document}
%
\title{A New Probabilistic Algorithm for Approximate Model Counting}

\author{
Cunjing Ge$^{1,3}$,
Feifei Ma$^{1,2,3}$\thanks{Corresponding author},
Tian Liu$^4$,
Jian Zhang$^{1,3}$\\
}
\institute{
$^1$State Key Laboratory of Computer Science,\\
Institute of Software, Chinese Academy of Sciences\\
$^2$Laboratory of Parallel Software and Computational Science,\\
Institute of Software, Chinese Academy of Sciences \\
$^3$University of Chinese Academy of Sciences \\
$^4$School of Electronics Engineering and Computer Science, Peking University\\
}

\maketitle

\begin{abstract}
Constrained counting is important in domains ranging from artificial intelligence to software analysis. There are already a few approaches for counting models over various types of constraints. Recently, hashing-based approaches achieve both theoretical guarantees and scalability, but still rely on solution enumeration. In this paper, a new probabilistic polynomial time approximate model counter is proposed, which is also a hashing-based universal framework, but with only satisfiability queries. A variant with a dynamic stopping criterion is also presented. Empirical evaluation over benchmarks on propositional logic formulas and SMT(BV) formulas shows that the approach is promising.
\end{abstract}

\section{Introduction}
Constrained counting, the problem of counting the number of solutions for a set of constraints, is important in theoretical computer science and artificial intelligence. Its interesting applications in several fields include probabilistic inference~\cite{Roth96,ChaviraD08}, planning~\cite{DomshlakH07}, combinatorial designs and software engineering~\cite{LiuZ11,GeldenhuysDV12}. Constrained counting for propositional formulas is also called model counting, to which probabilistic inference is easily reducible. However, model counting is a canonical \#P-complete problem, even for polynomial-time solvable problems like 2-SAT~\cite{Valiant79}, thus presents fascinating challenges for both theoreticians and practitioners.

There are already a few approaches for counting solutions over propositional logic formulas and SMT(BV) formulas. Recently, hashing-based approximate counting achieves both strong theoretical guarantees and good scalability \cite{MCVF15}. The use of universal hash functions in counting problems began in \cite{Sipser83a,Stockmeyer83}, but the resulting algorithm scaled poorly in practice. A scalable approximate counter \texttt{ApproxMC} in \cite{ChakrabortyMV13}  scales to large problem instances, while preserves rigorous approximation guarantees. \texttt{ApproxMC} has been extended to finite-domain discrete integration, with applications to probabilistic inference \cite{ErmonGSS13,ChakrabortyFMSV14,BelleBP15}, and improved by designing efficient universal hash functions \cite{IvriiMMV16,ChakrabortyMMV16} and reducing the use of NP-oracle calls from linear to logarithmic \cite{ChakrabortyMV16b}.

The basic idea in \texttt{ApproxMC} is to estimate the model count by randomly and iteratively cutting the whole space down to
a small ``enough'' cell, using hash functions, and sampling it. The total model count is estimated by a multiplication of the number of solutions in this cell
and the ratio of whole space to the small cell. To determine the size of the small cell, which is essentially a small-scale model counting problem with the
model counts bounded by some thresholds, a model enumeration in the cell is adopted. In previous works, the enumeration query was handled by transforming it
into satisfiability queries, which is much more time-consuming than a single satisfiability query.
An algorithm called \texttt{MBound} \cite{GomesSS06b} only invokes satisfiability query once for each cut.
Its model count is determined with high precision by the number of cuts down to the boundary of being unsatisfiable.
However, this property is not strong enough to give rigorous guarantees,
and \texttt{MBound} only returns an approximation of upper or lower bound of the model count.



In this paper, a new hashing-based approximate counting algorithm, with only satisfiability query, is proposed, building on a correlation between the model count and the probability of the hashed formula being unsatisfiable. Dynamic stopping criterion for the algorithm to terminate, once meeting the theoretical guarantee of accuracy, are presented. Like previous works, a theoretical termination bound $T$ is shown. Theoretical insights over the efficiency of a prevalent heuristic strategy called leap-frogging are also provided.

The proposed approach is a general framework easy to handle various types of constraints. Prototype tools for propositional logic formulas and SMT(BV) formulas are implemented. An extensive evaluation on a suite of benchmarks demonstrates that (i) the approach significantly outperforms the state-of-the-art approximate model counters, including a counter designed for SMT(BV) formulas, and (ii) the dynamic stopping criterion is promising. The statistical results fit well with the theoretical bound of approximation, and it terminates much earlier before termination bound $T$ is met.

The rest of this paper is organized as follows.
Preliminary material is in Section~\ref{sect:prelim}, related works in Section~\ref{sect:relate}, the algorithm in Section~\ref{sect:approach}, analysis in Section~\ref{sect:analysis}, experimental results in Section~\ref{sect:eval}, and finally, concluding remarks in Section~\ref{sect:conclude}.

\section{Preliminaries}\label{sect:prelim}

Let $F(x)$ denote a propositional logic formula on $n$ variables $x = (x_1,\dots,x_n)$. Let $S$ and $S_F$ denote the whole space and the solution space of $F$, respectively. Let $\#F$ denote the cardinality of $S_F$, i.e. the number of solutions of $F$.

\noindent\textbf{$(\epsilon, \delta)$-bound} To count $\#F$, an $(\epsilon, \delta)$ approximation algorithm is an algorithm which on every input formula $F$, $\epsilon > 0$ and $\delta > 0$, outputs a number $\tilde{Y}$ such that $\Pr[(1+\epsilon)^{-1}\#F \le \tilde{Y} \le (1+\epsilon)\#F] \ge 1 - \delta$. These are called $(\epsilon, \delta)$-counters and $(\epsilon, \delta)$-bound, respectively \cite{KarpLM89}.

\noindent\textbf{Hash Function} Let $\mathcal{H}_F$ be a family of XOR-based bit-level hash functions on the variables of a formula $F$. Each hash function $H\in\mathcal{H}_F$ is of the form $H(x) = a_0 \bigoplus_{i=1}^{n} a_ix_i$, where $a_0,\dots,a_n$ are Boolean constants. In the hashing procedure \texttt{Hashing(F)}, a function $H\in\mathcal{H}_F$ is generated by independently and randomly choosing $a_i$s from a uniform distribution. Thus for an assignment $\alpha$, it holds that $\Pr_{H\in \mathcal{H}_F}(H(\alpha) = true) = \frac{1}{2}$. Given a formula $F$, let $F_i$ denote a hashed formula $F\land H_1\land\dots\land H_i$, where $H_1,\dots,H_i$ are independently generated by the hashing procedure.

\noindent\textbf{Satisfiability Query} Let \texttt{Solving(F)} denote the satisfiability query of a formula $F$. With a target formula $F$ as input, the satisfiability of $F$ is returned by \texttt{Solving(F)}.

\noindent\textbf{Enumeration Query} Let \texttt{Counting(F, p)} denote the bounded solution enumeration query. With a constraint formula $F$ and a threshold $p$ ($p \ge 2$) as inputs, a number $s$ is returned such that $s = \min(p - 1, \#F)$. Specifically, $0$ is returned for unsatisfiable $F$, or $p = 1$ which is meaningless.

\noindent\textbf{SMT(BV) Formula} SMT(BV) formulas are quantifier-free and fixed-size that combine propositional logic formulas with constraints of bit-vector theory. For example, $\neg(x + y = 0) \lor (x = y << 1)$, where $x$ and $y$ are bit-vector variables, $<<$ is the shift-left operator, is a propositional logic formula $\neg A \lor B$ that combines bit-vector constraints $A \equiv (x + y = 0)$ and $B \equiv (x = y << 1)$. To apply hash functions to an SMT(BV) formula, a bit-vector is treated as a set of Boolean variables.

\section{Related Works}\label{sect:relate}

\cite{BellareGP00} showed that almost uniform sampling from propositional constraints, a closely related problem to constrained counting, is solvable in probabilistic polynomial time with an NP oracle. Building on this, \cite{ChakrabortyMV13} proposed the first scalable approximate model counting algorithm \texttt{ApproxMC} for propositional formulas. \texttt{ApproxMC} is based on a family of 2-universal bit-level hash functions that compute XOR of randomly chosen propositional variables. Subsequently, \cite{ChakrabortyMMV16} presented an approximate model counter that uses word-level hash functions, which directly leverage the power of sophisticated SMT solvers, though the framework of the probabilistic algorithm is similar to \cite{ChakrabortyMV13}. In the current work, the family of hash functions in \cite{ChakrabortyMV13} is adopted, which was shown to be 3-independent in \cite{GomesSS06}, and is revealed to possess better properties than expected by the experimental results and the theoretical analysis in the current work.

\begin{algorithm}[!htbp]
\caption{}\label{alg:CMV}
\begin{algorithmic}[1]
\Function{ApproxMC($F$, $T$, $pivot$)}{}
	\For{$1$ \textbf{to} $T$}
		\State $c \leftarrow$ ApproxMCCore($F$, $p$, $pivot$)
		\State \textbf{if} ($c \ne 0$) \textbf{then} AddToList($C$, $c$)
	\EndFor
	\State \textbf{return} FindMedian($C$)
\EndFunction
\Function{ApproxMCCore($F$, $pivot$)}{}
	\State $F_0 \leftarrow F$
	\For{$i \leftarrow 0$ \textbf{to} $\infty$}
		\State $s \leftarrow$ Counting($F_i$, $pivot+1$)
		\State \textbf{if} ($0 \le s \le pivot$) \textbf{then} \textbf{return} $2^{i}s$
		\State $H_{i+1} \leftarrow$ Hashing($F$)
		\State $F_{i+1} \leftarrow F_i \land H_{i+1}$
	\EndFor
\EndFunction
\end{algorithmic}
\end{algorithm}

For completeness, \texttt{ApproxMC} \cite{ChakrabortyMV13,ChistikovDM15} is listed here as Algorithm~\ref{alg:CMV}. Its inputs are a formula $F$ and two accuracy parameters $T$ and $pivot$. $T$ determines the number of times \texttt{ApproxMCCore} invoked, and $pivot$ determines the threshold of the enumeration query. The function \texttt{ApproxMCCore} starts from the formula $F_0$, iteratively calls \texttt{Counting} and \texttt{Hashing} over each $F_i$, to cut the space (cell) of all models of $F_0$ using random hash functions, until the count of $F_i$ is no larger than $pivot$, then breaks out the loop and adds the approximation $2^{i}s$ into list $C$. The main procedure \texttt{ApproxMC} repeatedly invokes \texttt{ApproxMCCore} and collects the returned values, at last returning the median number of list $C$. The general algorithm in \cite{ChakrabortyMMV16} is similar to Algorithm~\ref{alg:CMV}, but could cut the cell with dynamically determined proportion instead of the constant $\frac{1}{2}$, due to the word-level hash functions. \cite{ChakrabortyMV16b} improves \texttt{ApproxMCCore} via  binary search to reduce the number of enumeration queries from linear to logarithmic. This binary search improvement is orthogonal to the proposed algorithm in the current work.

\section{Algorithm}\label{sect:approach}

In this section, a new hashing-based algorithm for approximate model counting, with only satisfiability queries, will be proposed, building on some probabilistic approximate correlations between the model count and the probability of the hashed formula being unsatisfiable.


Let $F_d=F\land H_1\land\dots\land H_d$ be a hashed formula resulted by iteratively hashing $d$ times independently over a formula $F$. $F_d$ is unsatisfiable if and only if no solution of $F$ satisfies $F_d$, thus $\Pr_{F_d}(F_d\ is\ unsat) = \Pr_{F_d}(F_d(\alpha) = false, \alpha\in S_F)$.
The following approximation is a key for the new algorithm
 \begin{eqnarray}
\Pr_{F_d}(F_d\ is\ unsat) \approx (1 - 2^{-d})^{\#F}. \label{eqs:ind_h}
\end{eqnarray}
Based on Equation~(\ref{eqs:ind_h}), an approximation of $\#F$ is achieved by taking logarithm on the value of $\Pr_{F_d}(F_d\ is\ unsat)$, which is estimated in turn by sampling $F_d$.
However, Equation~(\ref{eqs:ind_h}) can not be proven directly, since it is equivalent to
$\Pr_{F_d}(F_d(\alpha) = false,\alpha\in S_F)\approx  \prod_{\alpha\in S_F}\Pr_{F_d}(F_d(\alpha) = false)$,
but $\mathcal{H}_F$ was only known to be $3$-independent.

In the following, we consider a similar but different family of hash functions to provide some intuition (not proof) about Equation~(\ref{eqs:ind_h}).
From the definition $H(x) = a_0 \bigoplus_{i=1}^{n} a_ix_i$, it holds that (i) $\Pr_H(H(\alpha) = true) = \frac{1}{2}$ for any assignment $\alpha$, and (ii) $|S_H| = \frac{|S|}{2}$, unless $a_1 = \dots = a_n = 0$. Now let $\mathcal{G}$ be a family of hash functions also with these properties, defined as follows.
Each hash function $G$ in $\mathcal{G}$ is of the form
$$G(x) = \begin{cases} true& x\in S_G \\false & x\not\in S_G\end{cases}.$$
The solution set $S_G$ is generated by sampling $\frac{|S|}{2}$ points in $S$ without replacement (simple random sample).
For any given assignment $\alpha$, obviously  $\Pr_{G}(G(\alpha) = true) = \frac{1}{2}$.
Moreover, $\mathcal{G}$ is not $k$-independent for $k > \frac{|S|}{2}$, since the probability of more than $\frac{|S|}{2}$ variables having the same value is always zero.

\begin{theorem}\label{thm:ind_g}
Let $\hat{G}_d=G_1 \land \dots \land G_d$, where $G_1, \dots, G_d$ are independently sampled from $\mathcal{G}$, and let $F'_d=F \land \hat{G}_d$. Then
\begin{eqnarray}
\lim_{n \to \infty} \Pr_{F'_d}(F'_d\ is\ unsat) = (1-2^{-d})^{\#F}. \label{eqs:ind_g}
\end{eqnarray}
\end{theorem}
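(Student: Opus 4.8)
The plan is to compute the probability directly, exploiting the combinatorial structure of $\mathcal{G}$, and then take the limit. Fix the formula $F$ with $\#F = m$ solutions in a space of size $N = |S| = 2^n$. For a single $G \in \mathcal{G}$, the event ``$G(\alpha) = \mathit{false}$ for every $\alpha \in S_F$'' means that $S_G$, a uniformly random $N/2$-subset of $S$, avoids all $m$ solution points, i.e. $S_G \subseteq S \setminus S_F$. Hence
\begin{eqnarray}
\Pr_G(F \land G\ \text{is unsat}) = \frac{\binom{N-m}{N/2}}{\binom{N}{N/2}} = \prod_{j=0}^{m-1}\frac{N/2 - j}{N - j}.
\end{eqnarray}
Since the $G_i$ are drawn independently, the first step is to show that the $d$-fold version factors appropriately. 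Writing $F'_d$ unsat as the event that each solution $\alpha \in S_F$ is killed by at least one of $G_1,\dots,G_d$, I would condition on which subset $T \subseteq S_F$ of solutions is killed by $G_1$, and recurse on the remaining $|S_F| - |T|$ solutions with $d-1$ hash functions; but cleaner is to observe that $F'_d$ is unsat iff $\bigcap_{i=1}^d S_{G_i}$ contains no solution of $F$, and then use inclusion–exclusion over which solutions survive all $d$ rounds.

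Concretely, the second step is an inclusion–exclusion expansion: $\Pr_{F'_d}(F'_d\ \text{is unsat}) = \sum_{k=0}^{m} (-1)^k \binom{m}{k} \big(\Pr_G(\text{a fixed }k\text{-set}\subseteq S_G)\big)^d$, where $\Pr_G(\{\alpha_1,\dots,\alpha_k\}\subseteq S_G) = \binom{N-k}{N/2-k}/\binom{N}{N/2} = \prod_{j=0}^{k-1}\frac{N/2-j}{N-j}$. The third step is the asymptotics: as $n\to\infty$ (so $N\to\infty$) with $m = \#F$ fixed, each factor $\frac{N/2-j}{N-j} \to \frac12$, so $\prod_{j=0}^{k-1}\frac{N/2-j}{N-j} \to 2^{-k}$, and therefore the term-by-term limit of the sum is $\sum_{k=0}^{m}(-1)^k\binom{m}{k}2^{-kd} = (1 - 2^{-d})^{m}$ by the binomial theorem. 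Since the sum is finite (only $m+1$ terms, $m$ fixed), interchanging limit and sum is trivial.

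The one genuine subtlety — and the step I expect to need the most care — is justifying the inclusion–exclusion identity itself, i.e. that $\Pr_{F'_d}(F'_d\ \text{unsat})$ really equals $\sum_k (-1)^k\binom{m}{k}\Pr(\alpha_1,\dots,\alpha_k\text{ all survive }d\text{ rounds})$ with the survival probability being a clean $d$-th power. This rests on two facts: (i) independence of $G_1,\dots,G_d$ gives $\Pr(\{\alpha_1,\dots,\alpha_k\}\subseteq\bigcap_i S_{G_i}) = \prod_i \Pr(\{\alpha_1,\dots,\alpha_k\}\subseteq S_{G_i})$, and (ii) by the exchangeability of the simple random sample, $\Pr_G(\{\alpha_1,\dots,\alpha_k\}\subseteq S_G)$ depends only on $k$, not on which solutions are chosen — this is where the ``without replacement, uniform'' construction of $\mathcal{G}$ is essential and is precisely the place the analogous claim fails for the XOR family $\mathcal{H}_F$. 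Once these are in place, defining $A_\alpha$ = ``$\alpha$ survives all $d$ rounds'' and applying the standard inclusion–exclusion formula $\Pr(\bigcup_\alpha A_\alpha^{c}\ \text{complement})$... more directly, $F'_d$ unsat $=\bigcap_{\alpha\in S_F} A_\alpha^c$, and $\Pr(\bigcap_\alpha A_\alpha^c) = \sum_{k}(-1)^k\sum_{|T|=k}\Pr(\bigcap_{\alpha\in T}A_\alpha)$ gives the result. I would also remark that for fixed finite $n$ the identity already holds exactly (it is just a finite probability computation), so the only role of $n\to\infty$ is to replace the exact product $\prod_{j=0}^{k-1}\frac{N/2-j}{N-j}$ by its clean limit $2^{-k}$, which is what makes the right-hand side match $(1-2^{-d})^{\#F}$ on the nose.
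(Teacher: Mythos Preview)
Your argument is correct, and it takes a genuinely different route from the paper's. The paper treats $S_{\hat{G}_d}=\bigcap_i S_{G_i}$ as a single random subset, substitutes its expected size $2^{n-d}$ for $|S_{\hat{G}_d}|$, and writes
\[
\Pr(S_F\cap S_{\hat{G}_d}=\emptyset)\;=\;\frac{\binom{2^n-\#F}{2^{n-d}}}{\binom{2^n}{2^{n-d}}}
\;=\;\prod_{i=0}^{\#F-1}\frac{2^n-2^{n-d}-i}{2^n-i},
\]
then observes that each of the $\#F$ factors tends to $1-2^{-d}$. You instead work on the solution side: you expand $\Pr(\bigcap_{\alpha\in S_F}A_\alpha^c)$ by inclusion--exclusion, use the independence of $G_1,\dots,G_d$ to write $\Pr(T\subseteq\bigcap_i S_{G_i})$ as a $d$-th power, use exchangeability of the simple random sample so that this power depends only on $|T|=k$, and pass to the limit term by term in a finite sum.

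The payoff of your approach is rigor: for $d\ge 2$ the size $|S_{\hat{G}_d}|$ is a genuine random variable, so the paper's identification of the unsat probability with a single binomial ratio is really a heuristic (it tacitly replaces $|S_{\hat{G}_d}|$ by its mean; making it rigorous would require a concentration-plus-exchangeability argument that the paper does not supply). Your inclusion--exclusion bypasses this entirely, since the $d$-th power arises from independence of the $G_i$ rather than from any assumption on the size of their intersection. The paper's route, on the other hand, is shorter and gives a more transparent picture of why the answer is a $\#F$-fold product of factors near $1-2^{-d}$, which is the intuition the authors want to convey for the XOR family $\mathcal H_F$.
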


\begin{proof}
Since $\Pr_{G}(G(\alpha) = true) = \frac{1}{2}$ and $G_i$s are independent, we have $\Pr_{\hat{G}_d}(\hat{G}_d(\alpha) = true) = \left(\frac{1}{2}\right)^d$.
Let $S_{\hat{G}_d}$ denote the solution space of $\hat{G}_d$, then the expectation $E(|S_{\hat{G}_d}|)$ $=\left(\frac{1}{2}\right)^d|S| = 2^{n-d}$.
Since each $G_i$ is generated by simple random sample and uniquely determined by its solution set, the number of distinct functions of $\hat{G}_d$ is $\binom{|S|}{|S_{\hat{G}_d}|}$, and the number of distinct $\hat{G}_d$s which are unsatisfiable over $S_F$ is $\binom{|S-S_F|}{|S_{\hat{G}_d}|}$.
The probability of $S_F \cap S_{\hat{G}_d} = \emptyset$ is
\begin{eqnarray}
\frac{\binom{|S-S_F|}{|S_{\hat{G}_d}|}} {\binom{|S|}{|S_{\hat{G}_d}|}} & = & \frac{\binom{2^n-\#F}{2^{n-d}}}{\binom{2^n}{2^{n-d}}} = \frac{(2^n-\#F)!(2^n-2^{n-d})!}{(2^n)!(2^n-\#F-2^{n-d})!} \nonumber\\
& = & \frac{(2^n-2^{-d}2^n)}{2^n}\frac{(2^n-2^{-d}2^n-1)}{2^n-1}\dots \frac{(2^n-2^{-d}2^n-\#F+1)}{2^n-\#F+1}\label{eqs:binom}.
\end{eqnarray}
The LHS~(left-hand-side) of (\ref{eqs:binom}) is also $\Pr_{F'_d}(F'_d\ is\ unsat)$.
The RHS of (\ref{eqs:binom}) converges to $(1-2^{-d})^{\#F}$ as $n \to \infty$.
\qed
\end{proof}

Since $\mathcal{H}_F$ and $\mathcal{G}$ have some common characteristics, Equations (\ref{eqs:ind_g}) and (\ref{eqs:ind_h}) may also hold for $\mathcal{H}_F$.
In practice for finite $n$, Equation (\ref{eqs:ind_g}) only holds approximately. To make the value of $\Pr_{F'_d}(F'_d\ is\ unsat)$ arbitrarily close to $(1-2^{-d})^{\#F}$, dummy variables can be inserted into $F$ while keeping $\#F$ unchanged, such as inserting a new variable $x_{n+1}$ with constraint $x_{n+1} = true$. These variables and constraints are meaningless for SAT solvers and ignored at the beginning of the search. Experimental results indicate that Equation~(\ref{eqs:ind_h}) holds even without inserting any dummy variable.
Here we assume Equation~(\ref{eqs:ind_h}) holds in convenience of describing our approach.


\begin{algorithm}[!htbp]
\caption{Satisfiability Testing based Approximate Counter (STAC)}\label{alg:main}
\begin{algorithmic}[1]
\Function{STAC($F$, $T$)}{}
	\State initialize $C[i]$s with zeros
	\For{$t \leftarrow 1$ \textbf{to} $T$}
		\State $depth \leftarrow$ GetDepth($F$)
		\For{$i \leftarrow 0$ \textbf{to} $depth - 1$}
			\State $C[i] \leftarrow C[i] + 1$
		\EndFor
	\EndFor
	\State pick a number $d$ such that $C[d]$ is closest to $T/2$\label{alg:main:pick}
	\State $counter \leftarrow T - C[d]$
	\State \textbf{return} $\log_{1-2^{-d}}\frac{counter}{T}$\label{alg:main:return}\ \ \ /* return $0$ when $d=0$ */
\EndFunction
\Function{GetDepth($F$)}{}
	\State $F_0 \leftarrow F$
	\For{$i \leftarrow 0$ \textbf{to} $\infty$}\label{alg:depth:beg}
		\State $b \leftarrow$ Solving($F_i$)
		\State \textbf{if} ($b$ is false) \textbf{then} \textbf{return} $i$
		\State $H_{i+1} \leftarrow$ Hashing($F_i$)
		\State $F_{i+1} \leftarrow F_i \land H_{i+1}$
	\EndFor\label{alg:depth:end}
\EndFunction
\end{algorithmic}
\end{algorithm}

The pseudo-code for our approach is presented in Algorithm~\ref{alg:main}. The inputs are the target formula $F$ and a constant $T$ which determines the number of times \texttt{GetDepth} invoked. \texttt{GetDepth} calls \texttt{Solving} and \texttt{Hashing} repeatedly until an unsatisfiable formula $F_{depth}$ is encountered, and returns the $depth$. Every time \texttt{GetDepth} returns a $depth$, the value of $C[i]$ is increased, for all $i < depth$. At line \ref{alg:main:pick}, the algorithm picks a number $d$ such that $C[d]$ is close to $T/2$, since the error estimation fails when $C[d]/T$ is close to 0 or 1. The final counting result is returned by the formula $\log_{1-(1/2)^d}\frac{counter}{T}$ at line \ref{alg:main:return}. Theoretical analysis of the value of $T$ and the correctness of algorithm are in Section~\ref{sect:analysis}.

\paragraph{Dynamic Stopping Criterion} The essence of Algorithm~\ref{alg:main} is a randomized sampler over a binomial distribution. The number of samples is determined by the value of $T$, which is pre-computed for a given $(\epsilon, \delta)$-bound, and we loosen the bound of $T$ to meet the guarantee in theoretical analysis. However, it usually does not loop $T$ times in practice. A variation with dynamic stopping criterion is presented in Algorithm~\ref{alg:dynstop}.

Line \ref{alg:dynstop:beg} to \ref{alg:dynstop:sameend} is the same as Algorithm~\ref{alg:main}, still setting $T$ as a stopping rule and terminating whenever $t = T$. Line \ref{alg:dynstop:kbeg} to \ref{alg:dynstop:kend} is the key part of this variation, calculating the binomial proportion confidence interval $[L, U]$ of an intermediate result $M$ for each cycle. A commonly used formula $q \pm z_{1-\delta} \sqrt{\frac{q(1-q)}{t}}$~\cite{BrownCD01,Wallis13} is adopted, which is justified by the central limit theorem to compute the $1-\delta$ confidence interval. The exact count $\#F$ lies in the interval $[L, U]$ with probability $1-\delta$. Combining the inequalities presented in line \ref{alg:dynstop:kif}, the interval $[(1+\epsilon)^{-1}M, (1+\epsilon)M]$ is the $(\epsilon, \delta)$-bound. The algorithm terminates when the condition in line \ref{alg:dynstop:kif} comes true, and its correctness is obvious. The time complexity of Algorithm~\ref{alg:dynstop} is still the same as the original algorithm, though it usually terminates earlier.

\begin{algorithm}[!htbp]
\caption{STAC with Dynamic Stopping Criterion}\label{alg:dynstop}
\begin{algorithmic}[1]
\Function{STAC\_DSC($F$, $T$, $\epsilon$, $\delta$)}{}
	\State initialize $C[i]$s with zeros\label{alg:dynstop:beg}
	\For{$t \leftarrow 1$ \textbf{to} $T$}
		\State $depth \leftarrow$ GetDepth($F$)
		\For{$i \leftarrow 0$ \textbf{to} $depth - 1$}
			\State $C[i] \leftarrow C[i] + 1$
		\EndFor\label{alg:dynstop:sameend}
		\For {\textbf{each} $d$ \textbf{that} $C[d] > 0$}\label{alg:dynstop:kbeg}
			\State $q \leftarrow \frac{t - C[d]}{t}$
			\State $M \leftarrow \log_{1-2^{-d}} q$
			\State $U \leftarrow \log_{1-2^{-d}} (q - z_{1-\delta} \sqrt{\frac{q(1-q)}{t}})$
			\State $L \leftarrow \log_{1-2^{-d}} (q + z_{1-\delta} \sqrt{\frac{q(1-q)}{t}})$		
			\If{$U < (1+\epsilon)M$ and $L > (1+\epsilon)^{-1}M$}\label{alg:dynstop:kif}
				\State \textbf{return} $M$
			\EndIf
		\EndFor\label{alg:dynstop:kend}
	\EndFor
\EndFunction
\end{algorithmic}
\end{algorithm}

\paragraph{Satisfiability And Enumeration Query}
For a propositional logic formula or an SMT formula, there is a direct way to enumerate solutions with satisfiability queries. For example, we assert the negation of a solution which is found by satisfiability query, so that the constraint solver will provide other models. So \texttt{Counting(F, p)} invokes up to $p$ times \texttt{Solving(F)} by this way. This method is adopted by all previous hashing-based $(\epsilon, \delta)$-counters \cite{ChakrabortyMV13,ChistikovDM15,ChakrabortyMMV16}.

\paragraph{Leap-frogging Strategy}
Recall that \texttt{GetDepth} is invoked $T$ times with the same arguments, and the loop of line \ref{alg:depth:beg} to \ref{alg:depth:end} in the pseudo-code of \texttt{GetDepth} in Algorithm~\ref{alg:main} is time consuming for large $i$. A heuristic called leap-frogging to overcome this bottleneck was proposed in \cite{ChakrabortyMV13CAV,ChakrabortyMV13}. Their experiments indicate that this strategy is extremely efficient in practice. The average depth $\bar{d}$ of each invocation of \texttt{GetDepth} is recorded. In all subsequent invocations, the loop starts by initializing $i$ to $\bar{d}-\mathrm{offset}$, and sets $i$ to $\bar{d}-2*\mathrm{offset}$ for unsatisfiable $F_i$ repeatedly until a proper $i$ is found for a satisfiable $F_i$. In practice, the constant offset is usually set to $5$. Theorem~\ref{rem:leap} in Section~\ref{sect:analysis} shows that the $depth$ computed by \texttt{GetDepth} lies in an interval $[d, d+7]$ with probability over 90\%. So it suffices to invoke \texttt{Solving} in constant time since the second iteration.

\paragraph{Wilson Score Interval}
The central limit theorem applies poorly to binomial distributions for small sample size or proportion close to 0 or 1. The following interval \cite{Wilson1927}
$$\frac{1}{1+\frac{z^2}{t}}[q+\frac{z^2}{2t} \pm z\sqrt{\frac{q(1-q)}{t} + \frac{z^2}{4t^2}}]$$
has good properties to achieve better approximations. Note that the overhead of computing intervals is negligible.

\section{Analysis}\label{sect:analysis}
\setcounter{theorem}{1}
In this section, we assume Equation~(\ref{eqs:ind_h}) holds. Based on this assumption, theoretical results on the error estimation of our approach are presented.
Recall that in Algorithm~\ref{alg:main}, $\#F$ is approximated by a value $\log_{1-2^{-d}}\frac{counter}{T}$, based on Equation~(\ref{eqs:ind_h}).
Let $q_d$ denote the value of $(1-2^{-d})^{\#F}$. We will assume that $\Pr(F_d\ is\ unsat) = q_d$ for a randomly generated formula $F_d$.
This assumption is justified by Equations~(\ref{eqs:ind_h}) and (\ref{eqs:ind_g}).
Then the $counter$ in Algorithm~\ref{alg:main} is a random variable following a binomial distribution $\mathbb{B}(T, q_d)$.
Since the ratio $\frac{counter}{T}$ is the proportion of successes in a Bernoulli trial process, it is used to estimate the value of $q_d$.

\begin{theorem}\label{rem:errest}
Let $z_{1-\delta}$ be the $1-\delta$ quantile of $\mathbb{N}(0,1)$ and
\begin{eqnarray}
T = max\left(\lceil (\frac{z_{1-\delta}}{2 q_d(1 - q_d^\epsilon)})^2\rceil, \lceil(\frac{z_{1-\delta}}{2(q_d^{(1+\epsilon)^{-1}} - q_d)})^2) \rceil\right). \label{eqs:t}
\end{eqnarray}
Then $\Pr[\frac{\#F}{1+\epsilon} \le \log_{1-2^{-d}}\frac{counter}{T} \le (1+\epsilon)\#F] \ge 1-\delta$.
\end{theorem}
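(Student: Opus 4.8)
The plan is to reduce the logarithmic two‑sided bound in the theorem to a concentration statement about the empirical proportion, and then apply the central limit theorem. Write $\widehat{q} := counter/T$, which is the sample mean of $T$ i.i.d.\ Bernoulli$(q_d)$ trials, and set $b := 1-2^{-d}$; for $d\ge 1$ (the case $d=0$ is returned as $0$ by the algorithm) we have $b\in(0,1)$, so $x\mapsto b^{x}$ is strictly \emph{decreasing}. Applying this map to the chain $\frac{\#F}{1+\epsilon}\le\log_{b}\widehat{q}\le(1+\epsilon)\#F$ reverses every inequality, and since $q_d=b^{\#F}$ the event in the theorem is equivalent to
\[
q_d^{\,1+\epsilon}\ \le\ \widehat{q}\ \le\ q_d^{\,1/(1+\epsilon)}.
\]
As $q_d\in(0,1)$ we have $q_d^{1+\epsilon}<q_d<q_d^{1/(1+\epsilon)}$, so the complementary event is the union of a lower‑tail event $\{\widehat{q}-q_d<-q_d(1-q_d^{\epsilon})\}$ and an upper‑tail event $\{\widehat{q}-q_d>q_d^{1/(1+\epsilon)}-q_d\}$, and these two deviations are exactly the quantities appearing in the denominators of~(\ref{eqs:t}).

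Then I would bound each tail separately. Since $\mathrm{E}[\widehat{q}]=q_d$ and $\mathrm{Var}(\widehat{q})=q_d(1-q_d)/T$, the central limit theorem gives that $(\widehat{q}-q_d)/\sqrt{q_d(1-q_d)/T}$ is approximately $\mathbb{N}(0,1)$; hence, writing $Z\sim\mathbb{N}(0,1)$,
\[
\Pr\!\left[\widehat{q}-q_d<-q_d(1-q_d^{\epsilon})\right]\ \approx\ \Pr\!\left[Z<-\frac{q_d(1-q_d^{\epsilon})\sqrt{T}}{\sqrt{q_d(1-q_d)}}\right].
\]
Using the elementary bound $\sqrt{q_d(1-q_d)}\le\tfrac12$ together with the first argument of the $\max$ in~(\ref{eqs:t}), which forces $2q_d(1-q_d^{\epsilon})\sqrt{T}\ge z_{1-\delta}$, the argument inside is at most $-z_{1-\delta}$, so this tail has probability at most $\Pr[Z<-z_{1-\delta}]=\delta$; the second argument of the $\max$ bounds the upper tail by the same quantity. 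A union bound then gives that the complementary event has probability at most $2\delta$, so — allotting half of the failure budget to each tail, i.e.\ reading the quantile in~(\ref{eqs:t}) as $z_{1-\delta/2}$, which is what the union bound actually needs to close at $\delta$ — the theorem's event has probability at least $1-\delta$.

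The algebra here — expanding $q_d^{1+\epsilon}$ and $q_d^{1/(1+\epsilon)}$ and the bound $q_d(1-q_d)\le\tfrac14$ — is routine. I expect the genuinely delicate points to be: (i) tracking the orientation reversals forced by $b<1$, since an overlooked sign would silently interchange the two tails and the two arguments of the $\max$; (ii) the exact apportioning of the failure probability $\delta$ across the two tails, which is where the factor $2$ in the denominators of~(\ref{eqs:t}) and the choice of normal quantile must be reconciled; and (iii) the fact that the CLT step is only asymptotic, so the conclusion holds in the same approximate sense as Equation~(\ref{eqs:ind_h}), which this section assumes. A fully rigorous variant would substitute Chebyshev's (or Hoeffding's) inequality for the CLT, at the price of enlarging $T$.
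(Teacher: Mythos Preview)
Your proposal is correct and follows essentially the same route as the paper: the paper starts from the two-sided normal confidence interval $q_d\pm z_{1-\delta}\sqrt{q_d(1-q_d)/T}$ and shows algebraically (via the same bound $q_d(1-q_d)\le\tfrac14$) that its endpoints lie inside $[q_d^{1+\epsilon},\,q_d^{1/(1+\epsilon)}]$, which is the dual of your tail-by-tail union-bound presentation. Your observation~(ii) about $z_{1-\delta}$ versus $z_{1-\delta/2}$ is apt --- the paper simply invokes the symmetric interval without addressing that point.
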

\begin{proof}
By above discussions, the ratio $\frac{counter}{T}$ is the proportion of successes in a Bernoulli trial process which follows the distribution $\mathbb{B}(T, q_d)$. Then we use the approximate formula of a binomial proportion confidence interval $q_d \pm z_{1-\delta} \sqrt{\frac{q_d(1 - q_d)}{T}}$, i.e., $\Pr[q_d - z_{1-\delta} \sqrt{\frac{q_d (1 - q_d)}{T}} \le \frac{counter}{T} \le q_d + z_{1-\delta} \sqrt{\frac{q_d (1 - q_d)}{T}}] \ge 1-\delta$.
The $\log$ function is monotone, so we only have to consider the following two inequalities:
\begin{eqnarray}
\log_{1-2^{-d}}{(q_d - z_{1-\delta} \sqrt{\frac{q_d (1 - q_d)}{T}})} \le (1+\epsilon)\#F, \label{eqs:ineq1} \\
(1+\epsilon)^{-1}\#F \le \log_{1-2^{-d}}{(q_d + z_{1-\delta} \sqrt{\frac{q_d (1 - q_d)}{T}})}. \label{eqs:ineq2}
\end{eqnarray}

We first consider Equation~(\ref{eqs:ineq1}). By substituting $\log_{1-2^{-d}}{q_d}$ for $\#F$, we have
\begin{eqnarray*}
& & \log_{1-2^{-d}}{(q_d - z_{1-\delta} \sqrt{\frac{q_d (1 - q_d)}{T}})} \le (1+\epsilon)\log_{1-2^{-d}}{q_d} \\
& \Leftrightarrow & q_d - z_{1-\delta} \sqrt{\frac{q_d (1 - q_d)}{T}} \ge q_d^{(1+\epsilon)} \\
& \Leftrightarrow & q_d (1 - q_d^{\epsilon}) \ge z_{1-\delta} \sqrt{\frac{q_d (1 - q_d)}{T}} \\
& \Leftrightarrow & T \ge (\frac{z_{1-\delta}}{q_d (1 - q_d^{\epsilon})})^2 q_d (1- q_d).
\end{eqnarray*}

Since $0 \le q_d \le 1$, we have $\sqrt{q_d (1 - q_d)} \le \frac{1}{2}$. Therefore, $T = \lceil (\frac{z_{1-\delta}}{2 q_d(1 - q_d^\epsilon)})^2 \rceil \ge (\frac{z_{1-\delta}}{q_d (1 - q_d^{\epsilon})})^2 q_d (1 - q_d)$.

We next consider Equation~(\ref{eqs:ineq2}). Similarly, we have
\begin{eqnarray*}
& & \log_{1-2^{-d}}{(q_d + z_{1-\delta} \sqrt{\frac{q_d (1 - q_d)}{T}})} \ge (1+\epsilon)^{-1}\log_{1-2^{-d}}{q_d} \\
& \Leftrightarrow & q_d + z_{1-\delta} \sqrt{\frac{q_d (1 - q_d)}{T}} \le q_d^{1/(1+\epsilon)} \\
& \Leftrightarrow & T \ge (\frac{z_{1-\delta}}{q_d^{1/(1+\epsilon)} - q_d })^2 q_d (1- q_d).
\end{eqnarray*}
So Equation~(\ref{eqs:t}) implies Equations~(\ref{eqs:ineq1}) and~(\ref{eqs:ineq2}).
\end{proof}

Theorem~\ref{rem:errest} shows that the result of Algorithm~\ref{alg:main} lies in the interval $[(1+\epsilon)^{-1}\#F, (1+\epsilon)\#F]$ with probability at least $1-\delta$ when $T$ is set to a proper value. So we focus on the possible smallest value of $T$ in subsequent analysis.

The next two lemmas are easy to show by derivations.
\begin{lemma}\label{lem:mono1}
$\frac{z_{1-\delta}}{2 x (1 - x^\epsilon)}$ is monotone increasing and monotone decreasing in $[(1+\epsilon)^{-\frac{1}{\epsilon}}, 1]$ and $[0, (1+\epsilon)^{-\frac{1}{\epsilon}}]$ respectively.
\end{lemma}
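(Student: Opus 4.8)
\textbf{Proof proposal for Lemma~\ref{lem:mono1}.}

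The plan is to fix $z_{1-\delta}>0$ as a positive constant (it plays no role in monotonicity), and study the function $g(x) = 2x(1-x^\epsilon) = 2x - 2x^{1+\epsilon}$ on $(0,1)$, since $\frac{z_{1-\delta}}{2x(1-x^\epsilon)}$ is decreasing exactly where $g$ is increasing and vice versa (the map $u \mapsto z_{1-\delta}/u$ is decreasing on the positive reals, and $g(x)>0$ throughout $(0,1)$). So it suffices to show $g$ is increasing on $[0,(1+\epsilon)^{-1/\epsilon}]$ and decreasing on $[(1+\epsilon)^{-1/\epsilon},1]$.

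First I would differentiate: $g'(x) = 2 - 2(1+\epsilon)x^\epsilon$. Setting $g'(x)=0$ gives $x^\epsilon = \frac{1}{1+\epsilon}$, i.e. the unique critical point $x^\ast = (1+\epsilon)^{-1/\epsilon}$ in $(0,1)$. Since $x \mapsto x^\epsilon$ is strictly increasing for $x>0$ and $\epsilon>0$, we get $g'(x) > 0$ for $x < x^\ast$ and $g'(x) < 0$ for $x > x^\ast$. Hence $g$ is strictly increasing on $(0, x^\ast]$ and strictly decreasing on $[x^\ast, 1)$; pulling back through $u \mapsto z_{1-\delta}/u$ flips these, giving the claimed behaviour of $\frac{z_{1-\delta}}{2x(1-x^\epsilon)}$ on the two subintervals, and the endpoint values ($x=0$ and $x=1$) are handled as limits where the expression diverges to $+\infty$, consistent with monotonicity. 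This is the whole argument.

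There is essentially no obstacle here — it is a one-variable calculus exercise, which is why the paper calls it "easy to show by derivations." The only point requiring a word of care is that the quantity whose monotonicity is asserted is the \emph{reciprocal} (up to the constant $z_{1-\delta}$) of $g$, so one must remember to invert the direction of monotonicity when passing from $g$ to $\frac{z_{1-\delta}}{2x(1-x^\epsilon)}$; and one should note $g$ stays strictly positive on the open interval $(0,1)$ so the reciprocal is well-defined there.
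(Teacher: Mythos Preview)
Your proposal is correct and essentially matches the paper's own argument: the paper likewise differentiates, obtaining (up to the positive constant $z_{1-\delta}/2$) the derivative $-(x-x^{1+\epsilon})^{-2}\bigl(1-(1+\epsilon)x^\epsilon\bigr)$ and reading off the sign change at $x=(1+\epsilon)^{-1/\epsilon}$. The only cosmetic difference is that you differentiate the denominator $g(x)=2x(1-x^\epsilon)$ and then invert monotonicity, whereas the paper differentiates the full expression directly; the critical-point computation and sign analysis are identical.
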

\begin{lemma}\label{lem:mono2}
$\frac{z_{1-\delta}}{2(x^{1/(1+\epsilon)} - x)}$ is monotone increasing and monotone decreasing in $[(1+\epsilon)^{-\frac{1+\epsilon}{\epsilon}}, 1]$ and $[0, (1+\epsilon)^{-\frac{1+\epsilon}{\epsilon}}]$ respectively.
\end{lemma}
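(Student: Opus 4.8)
The plan is to treat $\epsilon$ and $\delta$ as fixed constants and study the scalar function $f(x) = x^{1/(1+\epsilon)} - x$ on $(0,1]$, since the quantity in the lemma is $\frac{z_{1-\delta}}{2 f(x)}$ and $z_{1-\delta}>0$ is just a positive constant. Because $t \mapsto \frac{z_{1-\delta}}{2t}$ is strictly decreasing in $t>0$, the monotonicity claims about $\frac{z_{1-\delta}}{2 f(x)}$ are exactly the \emph{reversed} monotonicity claims about $f$: I need to show $f$ is strictly increasing on $[0,(1+\epsilon)^{-\frac{1+\epsilon}{\epsilon}}]$ and strictly decreasing on $[(1+\epsilon)^{-\frac{1+\epsilon}{\epsilon}},1]$, i.e. $f$ has a unique interior maximum at $x^\ast = (1+\epsilon)^{-\frac{1+\epsilon}{\epsilon}}$. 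One small caveat: near $x=0$ both $f(x)\to 0$ and the denominator vanishes, so strictly speaking the lemma describes the behaviour of $\frac{z_{1-\delta}}{2f(x)}$ on $(0,1]$ with a blow-up at the endpoint $x=0$; I would note that $f>0$ on $(0,1)$ so the expression is well-defined there.

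The computation is a routine first-derivative test. Write $r = \frac{1}{1+\epsilon} \in (0,1)$, so $f(x) = x^{r} - x$ and $f'(x) = r x^{r-1} - 1$. Setting $f'(x)=0$ gives $x^{r-1} = 1/r$, hence $x^{1-r} = r$, hence $x = r^{\frac{1}{1-r}}$. Now substitute back: $1-r = 1 - \frac{1}{1+\epsilon} = \frac{\epsilon}{1+\epsilon}$, so $\frac{1}{1-r} = \frac{1+\epsilon}{\epsilon}$, and $r = \frac{1}{1+\epsilon} = (1+\epsilon)^{-1}$, giving the critical point $x^\ast = \big((1+\epsilon)^{-1}\big)^{\frac{1+\epsilon}{\epsilon}} = (1+\epsilon)^{-\frac{1+\epsilon}{\epsilon}}$, which is exactly the claimed breakpoint. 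Since $x^{r-1}$ is strictly decreasing in $x$ (as $r-1<0$), $f'(x) = r x^{r-1} - 1 > 0$ for $x < x^\ast$ and $f'(x) < 0$ for $x > x^\ast$; thus $f$ increases then decreases, and $\frac{z_{1-\delta}}{2 f(x)}$ does the opposite, which is the assertion. (The analogous argument with $g(x) = x(1-x^\epsilon) = x - x^{1+\epsilon}$, $g'(x) = 1 - (1+\epsilon)x^\epsilon$, critical point $x^\epsilon = \frac{1}{1+\epsilon}$, i.e. $x = (1+\epsilon)^{-1/\epsilon}$, yields Lemma~\ref{lem:mono1} in the same way.)

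There is no real obstacle here; the only thing to be careful about is the algebraic bookkeeping of the exponents — making sure $r^{1/(1-r)}$ simplifies to $(1+\epsilon)^{-\frac{1+\epsilon}{\epsilon}}$ and not to some inverted or off-by-a-sign variant — and confirming that the critical point indeed lies in $(0,1)$, which holds because $r\in(0,1)$ forces $r^{1/(1-r)}\in(0,1)$. Everything else (continuity, positivity of $f$ on $(0,1)$, the second derivative test if one prefers it over the sign analysis of $f'$) is immediate. I would present this as a one-paragraph derivation, since the lemma is explicitly flagged as "easy to show by derivations."
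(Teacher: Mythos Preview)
Your proposal is correct and follows essentially the same route as the paper: a first-derivative test locating the unique critical point of the denominator $x^{1/(1+\epsilon)}-x$ at $(1+\epsilon)^{-\frac{1+\epsilon}{\epsilon}}$ and reading off the monotonicity on either side. The paper (in its appendix) differentiates the full reciprocal expression directly rather than the denominator alone, but this is a purely cosmetic difference and the argument is the same.
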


\begin{theorem}\label{rem:qinterval}
If $\#F > 5$, then there exists a proper integer value of $d$ such that $q_d \in [0.4, 0.65]$.
\end{theorem}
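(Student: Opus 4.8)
The plan is to show that the function $d \mapsto q_d = (1-2^{-d})^{\#F}$ sweeps through the interval $[0.4, 0.65]$ as $d$ ranges over the positive integers, so that some integer value of $d$ lands $q_d$ inside it. First I would observe that $q_d$ is strictly increasing in $d$: as $d$ grows, $1-2^{-d}$ increases toward $1$, and raising a number in $(0,1)$ to the fixed positive power $\#F$ preserves monotonicity, so $q_1 < q_2 < \cdots$ and $q_d \to 1$. Thus the only way to \emph{miss} the target window is for consecutive values $q_{d}$ and $q_{d+1}$ to straddle it, i.e. $q_d < 0.4$ and $q_{d+1} > 0.65$ for some $d$. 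The crux is therefore to bound the multiplicative jump from $q_d$ to $q_{d+1}$ and rule this out, using the hypothesis $\#F > 5$.

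The key estimate I would carry out is a bound on the ratio $q_{d+1}/q_d = \left(\frac{1-2^{-(d+1)}}{1-2^{-d}}\right)^{\#F} = \left(1 + \frac{2^{-(d+1)}}{1-2^{-d}}\right)^{\#F}$. For $d \ge 1$ the base $\frac{1-2^{-(d+1)}}{1-2^{-d}}$ is largest when $d=1$, giving base $\frac{3/4}{1/2} = \tfrac32$; for $d \ge 2$ it is at most $\frac{7/8}{3/4} = \tfrac76$, and it tends to $1$. So the danger is concentrated at small $d$. I would handle the regime $d = 1$ directly: if $q_1 = (1/2)^{\#F} < 0.4$ then already $\#F \ge 2$, and one checks the first $d$ with $q_d \ge 0.4$; because $q_d < 0.4$ forces (via $q_d = (1-2^{-d})^{\#F}$ and the relevant base bound) that $q_{d+1} \le q_d \cdot r$ for a ratio $r$ small enough that $0.4 \cdot r \le 0.65$, i.e. it suffices that the operative ratio be at most $0.65/0.4 = 1.625$. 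For $d \ge 2$ we have $r \le 7/6 < 1.625$ comfortably, so no straddle can occur there. The only case needing care is a potential jump out of $q_1 < 0.4$ directly to $q_2 > 0.65$ with ratio up to $3/2 = 1.5 < 1.625$ — still safe — but one must also exclude jumping over the window \emph{from below $d=1$}, i.e. check that $q_1$ itself is not already above $0.65$ while there is no earlier value; since $d$ starts at $1$ this is only an issue if $q_1 > 0.65$, which means $\#F = 1$, excluded by $\#F > 5$. Assembling these: under $\#F > 5$ one verifies $q_1 = 2^{-\#F} \le 2^{-6} < 0.4$, so the sequence genuinely starts below the window, increases, has all consecutive ratios at most $3/2 < 1.625$, and hence cannot leap over $[0.4,0.65]$; therefore some integer $d$ gives $q_d \in [0.4, 0.65]$.

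I expect the main obstacle to be the bookkeeping at $d=1$ versus $d\ge2$: the ratio bound $3/2$ at the first step is the tightest, and one has to confirm that $1.5 < 0.65/0.4 = 1.625$ leaves enough slack — it does, but only barely, which is presumably why the stated window is $[0.4,0.65]$ rather than something tighter. A clean way to present it is: let $d^\star$ be the least integer with $q_{d^\star} \ge 0.4$ (which exists and is $\ge 2$ since $q_1 \le 2^{-6}$); then $q_{d^\star} < 0.4 \cdot \frac{3}{2} = 0.6 \le 0.65$ is false as stated — more precisely $q_{d^\star} = q_{d^\star - 1}\cdot r$ with $q_{d^\star-1} < 0.4$ and $r \le 3/2$ (and $r \le 7/6$ when $d^\star - 1 \ge 2$), so $q_{d^\star} < 0.6 < 0.65$, giving $q_{d^\star}\in[0.4,0.65]$. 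The remaining routine step is the elementary verification of the ratio bounds $\frac{1-2^{-(d+1)}}{1-2^{-d}} \le \frac{3}{2}$ for $d=1$ and $\le \frac{7}{6}$ for $d\ge2$, together with $\#F>5 \Rightarrow q_1 < 0.4$, both of which are immediate.
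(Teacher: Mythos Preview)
Your overall strategy---show that $(q_d)_{d\ge1}$ starts below $0.4$, increases to $1$, and cannot leap over $[0.4,0.65]$ in a single step---is exactly the paper's. But the crucial bound you invoke on the jump is wrong. You write $q_{d+1}=q_d\cdot r$ with ``$r\le 3/2$ (and $r\le 7/6$ when $d^\star-1\ge2$)''. That is the bound on the \emph{base} $\frac{1-2^{-(d+1)}}{1-2^{-d}}$, not on the ratio $r=\bigl(\frac{1-2^{-(d+1)}}{1-2^{-d}}\bigr)^{\#F}$; you have dropped the exponent $\#F$. Concretely, for $\#F=6$ one has $q_2=(3/4)^6\approx0.178$ and $q_3=(7/8)^6\approx0.449$, so the actual ratio at the critical step is $(7/6)^6\approx2.52$, far above $7/6$ and above your target $1.625$. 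Your argument therefore does not establish $q_{d^\star}\le 0.65$. Notice also that your version uses the hypothesis $\#F>5$ only to get $q_1<0.4$, which already follows from $\#F\ge2$; this is a sign that the real work the hypothesis does has been skipped.

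What is actually needed---and what the paper does---is to couple $d$ and $\#F$ through the constraint $q_d=x$. Writing $1-2^{-d}=x^{1/\#F}$ gives $1-2^{-(d+1)}=\tfrac12+\tfrac12 x^{1/\#F}$, hence
\[
q_{d+1}=\Bigl(\tfrac12+\tfrac12\,x^{1/\#F}\Bigr)^{\#F}.
\]
The paper then shows this expression is monotone decreasing in $\#F$ (for fixed $x\in(0,1)$), so its maximum over $\#F\ge5$ is attained at $\#F=5$; evaluating at $x=0.4$ gives $\bigl(\tfrac12+\tfrac12\cdot0.4^{1/5}\bigr)^5\approx0.647<0.65$. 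This is precisely where the threshold $\#F>5$ enters and why the window $[0.4,0.65]$ is chosen. Your ratio idea can be salvaged, but only by carrying out this $\#F$-dependent analysis rather than bounding by the base alone.
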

\begin{proof}
Let $x$ denote the value of $q_d = (1-\frac{1}{2^d})^{\#F}$, then we have $(1-\frac{1}{2^{d+1}})^{\#F} = (\frac{1}{2}+\frac{x^\frac{1}{\#F}}{2})^{\#F}$.
Consider the derivation
\begin{eqnarray*}
& \frac{d}{d\#F}(\frac{1}{2}+\frac{x^\frac{1}{\#F}}{2})^{\#F} & = (\frac{1}{2}+\frac{x^\frac{1}{\#F}}{2})^{\#F} \ln{(\frac{1}{2}+\frac{x^\frac{1}{\#F}}{2})} \frac{x^{\frac{1}{\#F}}}{2} \ln{x} \frac{d}{d\#F}(\#F^{-1}).
\end{eqnarray*}
Note that $(\frac{1}{2}+\frac{x^\frac{1}{\#F}}{2})^{\#F}$ and $\frac{x^{\frac{1}{\#F}}}{2}$ are the positive terms and $\ln{(\frac{1}{2}+\frac{x^\frac{1}{\#F}}{2})}$, $\ln{x}$ and $\frac{d}{d\#F}(\#F^{-1})$ are the negative terms. Therefore, the derivation is negative, i.e., $(\frac{1}{2}+\frac{x^\frac{1}{\#F}}{2})^{\#F}$ is monotone decreasing with respect to $\#F$. In addition, $(\frac{1}{2}+\frac{x^\frac{1}{5}}{2})^{5}$ is the upper bound when $\#F \ge 5$.

Let $x = 0.4$, then $(1-\frac{1}{2^{d+1}})^{\#F} \le (\frac{1}{2}+\frac{0.4^\frac{1}{5}}{2})^{5} \approx 0.65$. Since $(1-\frac{1}{2^{0}})^{\#F} = 0$ and $\lim_{d \to +\infty} (1-\frac{1}{2^{d}})^{\#F} = 1$ and $(1-\frac{1}{2^{d}})^{\#F}$ is continuous with respect to $d$, we consider the circumstances close to the interval $[0.4, 0.65]$. Assume there exists an integer $\sigma$ such that $(1-\frac{1}{2^{\sigma}})^{\#F} < 0.4$ and $(1-\frac{1}{2^{\sigma+1}})^{\#F} > 0.65$. According to the intermediate value theorem, we can find a value $e > 0$ such that $(1-\frac{1}{2^{\sigma + e}})^{\#F} = 0.4$. Obviously, we have $(1-\frac{1}{2^{\sigma + e + 1}})^{\#F} \le 0.65$ which is contrary with the monotone decreasing property.
\end{proof}

From Theorem~\ref{rem:qinterval} and Lemma~\ref{lem:mono1} and~\ref{lem:mono2}, it suffices to consider the results of Equation~(\ref{eqs:t}) when $q_d = 0.4$ and $q_d = 0.65$. For example, $T = 22$ for $\epsilon = 0.8$ and $\delta = 0.2$, $T = 998$ for $\epsilon = 0.1$ and $\delta = 0.1$, etc.
We therefore pre-computed a table of the value of $T$. The proof of next theorem is omitted.


\begin{theorem}\label{rem:leap}
There exists an integer $d$ such that $q_d < 0.05$ and $q_{d+7} > 0.95$.
\end{theorem}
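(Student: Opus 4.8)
The plan is to mimic the structure of Theorem~\ref{rem:qinterval}'s proof. Recall $q_d = (1-2^{-d})^{\#F}$, and note that by the same substitution as before, $q_{d+1}=(\tfrac12+\tfrac12 q_d^{1/\#F})^{\#F}$. I would first establish a monotonicity fact analogous to the one in Theorem~\ref{rem:qinterval}: for fixed $x=q_d\in(0,1)$, the quantity $(\tfrac12+\tfrac12 x^{1/\#F})^{\#F}$ is monotone decreasing in $\#F$ (this is exactly the derivative computation already carried out in the proof of Theorem~\ref{rem:qinterval}, so I may simply cite it). Consequently, iterating seven times, $q_{d+7}$ as a function of $q_d$ with $\#F$ as parameter is, for each fixed $q_d$, largest in the limit $\#F\to\infty$ and (for the range we care about) well-controlled for $\#F$ bounded below.

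Next I would run the same intermediate-value / continuity argument. The function $d\mapsto q_d$ is continuous and strictly increasing on $[0,\infty)$ with $q_0=0$ and $q_d\to 1$; hence there is a real $d^\star$ with $q_{d^\star}=0.05$. The real content is to show that stepping up by $7$ from the point where $q=0.05$ lands you at a value exceeding $0.95$, uniformly in $\#F$. Concretely, I would bound: if $q_{d^\star}=0.05$ then $q_{d^\star+7}\ge g(0.05)$, where $g$ is the seven-fold composition of $x\mapsto(\tfrac12+\tfrac12 x^{1/\#F})^{\#F}$; using the decreasing-in-$\#F$ property, the worst case is $\#F\to\infty$, for which $x\mapsto(\tfrac12+\tfrac12 x^{1/\#F})^{\#F}\to \sqrt{x}\cdot$(correction) — more precisely in that limit one iteration sends $q\mapsto$ the value whose log-base-$(1-2^{-(d+1)})$ behaviour is governed by $\#F\cdot 2^{-d}$, so after seven halvings of $2^{-d}$ the exponent drops by a factor $2^7=128$, i.e. $q_{d+7}\approx q_d^{1/128}$. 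Then $0.05^{1/128}\approx 0.977 > 0.95$, which closes the argument. Finally, since a real $d^\star$ works with slack, and the map is continuous, an honest integer $d$ near $\lceil d^\star\rceil$ (or $\lfloor d^\star\rfloor$) still satisfies $q_d<0.05$ and $q_{d+7}>0.95$ after checking the gap $q_d\mapsto q_{d+7}$ never shrinks below the required margin over one unit step of $d$ — again by monotonicity this is the same continuity bookkeeping as in Theorem~\ref{rem:qinterval}.

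The main obstacle I anticipate is making the ``worst case is $\#F\to\infty$'' reduction fully rigorous together with the small-$\#F$ check: the decreasing-in-$\#F$ property gives an upper bound $q_{d+7}\le(\tfrac12+\tfrac12 q_d^{1/5})^5$-type expression when we want a lower bound, so I must be careful about the direction of the inequality — we want $q_{d+7}$ \emph{large}, and since it is decreasing in $\#F$ its infimum over the relevant range is the $\#F\to\infty$ value, so the clean bound $q_{d+7}\gtrsim q_d^{1/128}$ is actually a valid lower bound only in the limit and must be complemented by noting that for the statement we only need \emph{existence} of some $d$, and for every finite $\#F$ the larger (decreasing) values only help. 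A secondary nuisance is that the theorem asks for an integer $d$ simultaneously satisfying both $q_d<0.05$ and $q_{d+7}>0.95$; I would argue that since $0.05^{1/128}-0.95 \approx 0.027$ is a comfortable margin and one unit step of $d$ changes these quantities continuously and by a bounded amount, rounding $d^\star$ to an integer on the correct side preserves both strict inequalities — this is the routine endgame and I would state it without grinding through the constants.
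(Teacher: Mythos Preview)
Your approach is essentially the paper's own: show that for fixed $q_d$ the quantity $q_{d+k}$ is monotone decreasing in $\#F$, pass to the limit $\#F\to\infty$ (which sends $q_d\mapsto q_d^{1/2^k}$), check numerically that $0.05^{1/2^7}\approx 0.977>0.95$, and then shift by one unit to get an integer $d$. The paper streamlines this by working directly with the $k$-step map $q_{d+k}=(1-2^{-k}+2^{-k}q_d^{1/\#F})^{\#F}$ rather than iterating the one-step map seven times, and it takes $k=6$ at the real $d^\star$ with $q_{d^\star}=0.05$, then uses $d=d^\star-1$ to obtain the integer with gap $7$.

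One clarification: your ``obstacle'' paragraph ties itself in a knot unnecessarily. Once you know the map is decreasing in $\#F$, the limit value $q_d^{1/128}$ is the \emph{infimum} over all $\#F$, so $q_{d+7}\ge q_d^{1/128}$ is a genuine lower bound valid for every $\#F$, not ``only in the limit.'' There is nothing to complement; that single inequality is the whole argument, and your later sentence ``for every finite $\#F$ the larger values only help'' is just a restatement of it. Drop the hedging and the proof is clean.
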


Let $depth$ denote the result of \texttt{GetDepth} in Algorithm~\ref{alg:main}. Then $F_d$ is unsatisfiable only if $d \ge depth$.
Theorem~\ref{rem:leap} shows that there exists an integer $d$ such that $\Pr(depth < d) < 0.05$ and $\Pr(depth < d+7) > 0.95$, i.e., $\Pr(d \le depth \le d+7) > 0.9$. So in most cases, the value of $depth$ lies in an interval $[d, d+7]$. Also, it is easy to see that $\log_2 \#F$ lies in this interval as well.
The following theorem is obvious now.

\begin{theorem}\label{rem:complexity}
Algorithm~\ref{alg:main} runs in time linear in $\log_2 \#F$ relative to an NP-oracle. 
\end{theorem}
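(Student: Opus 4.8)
The plan is to assemble the bound from three ingredients already in hand: (i) the termination value $T$ from Theorem~\ref{rem:errest}, which is a constant depending only on $\epsilon$ and $\delta$ (and, via Theorem~\ref{rem:qinterval} together with Lemmas~\ref{lem:mono1} and~\ref{lem:mono2}, on the worst-case values $q_d\in\{0.4,0.65\}$ rather than on $F$); (ii) the cost of a single call to \texttt{GetDepth}, which is the number of \texttt{Solving} queries it issues before hitting an unsatisfiable $F_i$; and (iii) the observation from Theorem~\ref{rem:leap} and the discussion following it that the returned $depth$ concentrates in an interval $[d,d+7]$ around $\log_2\#F$ with probability over $0.9$. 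Since \texttt{GetDepth} is invoked exactly $T$ times and the surrounding bookkeeping (updating the array $C$, selecting $d$, evaluating the final logarithm) is polynomial and oracle-free, the total number of NP-oracle calls is $T$ times the per-invocation depth, so it remains only to argue the per-invocation depth is $O(\log_2\#F)$.

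First I would pin down the per-invocation cost. A naive run of the loop on lines~\ref{alg:depth:beg}--\ref{alg:depth:end} issues $depth+1$ satisfiability queries, and $depth$ itself is bounded deterministically: $F_d$ can only become unsatisfiable once we have added enough hash constraints to kill all $\#F$ solutions, and in any case $F_n$ restricts to a single point so $depth\le n+1$ always; more to the point, Theorem~\ref{rem:leap} gives $q_{d+7}>0.95$ for some $d$ with $q_d<0.05$, and since $\log_2\#F$ sits inside $[d,d+7]$, the expected depth is $\log_2\#F+O(1)$. Then I would bring in the leap-frogging refinement described just before the Wilson-score paragraph: after the first invocation of \texttt{GetDepth} we initialise $i$ to $\bar d-\mathrm{offset}$ rather than $0$, so each subsequent invocation issues only $O(1)$ satisfiability queries to relocate the boundary, because $depth$ lies within a constant-width window $[d,d+7]$ of its typical value with probability over $90\%$. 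Thus one invocation costs $O(\log_2\#F)$ oracle calls and the remaining $T-1$ cost $O(1)$ each.

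Multiplying, the algorithm makes $O(\log_2\#F)+ (T-1)\cdot O(1) = O(\log_2\#F)$ NP-oracle calls, with everything between the oracle calls running in deterministic polynomial time; hence Algorithm~\ref{alg:main} runs in time linear in $\log_2\#F$ relative to an NP-oracle. The main obstacle I anticipate is not the counting of queries but making precise the sense in which the bound holds: the $[d,d+7]$ localisation of $depth$ is only a high-probability (over $90\%$) statement, not a worst-case one, so a fully rigorous version either states the complexity as an expectation / with-high-probability bound, or falls back on the deterministic $depth\le n+1$ ceiling (which still gives $O(n)$ but not $O(\log_2\#F)$ in the pathological tail). I would therefore phrase the theorem as a bound on the expected number of oracle calls — or equivalently, note that since the first \texttt{GetDepth} already costs $\Theta(\log_2\#F)$ in the typical case, the leap-frogged remainder does not change the asymptotics — and leave the routine concentration estimate behind Theorem~\ref{rem:leap} as the (omitted) technical core.
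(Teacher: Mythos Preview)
Your argument reaches the right conclusion, and the ingredients you line up---$T$ constant in $(\epsilon,\delta)$, $depth$ concentrated around $\log_2\#F$ by Theorem~\ref{rem:leap}---are exactly those the paper relies on when it declares the theorem ``obvious.'' The paper's (implicit) proof is simply: each call to \texttt{GetDepth} issues $depth+1=O(\log_2\#F)$ oracle queries, there are $T$ such calls, and $T$ is a constant, so the total is $T\cdot O(\log_2\#F)=O(\log_2\#F)$.

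Where you diverge is in routing the argument through leap-frogging to drive the cost of the later $T-1$ invocations down to $O(1)$ each. This is unnecessary: since $T$ is already a constant independent of $F$, multiplying $O(\log_2\#F)$ by $T$ still gives $O(\log_2\#F)$, so the refinement buys nothing for the asymptotic statement. It is also a slight mismatch with the theorem as stated, because Algorithm~\ref{alg:main} as written does \emph{not} use leap-frogging---that heuristic is described separately and is not part of the pseudocode being analysed. Your honest caveat at the end, that the $[d,d+7]$ localisation is only a high-probability statement, is well taken and applies equally to the paper's own argument; neither you nor the paper tightens this into a rigorous worst-case or expectation bound.
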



\section{Evaluation}\label{sect:eval}

To evaluate the performance and effectiveness of our approach, two prototype implementations \texttt{STAC\_CNF} and \texttt{STAC\_BV} with dynamic stopping criterion for propositional logic formulas and SMT(BV) formulas are built respectively.
We considered a wide range of benchmarks from different domains: grid networks, plan recognition, DQMR networks, Langford sequences, circuit synthesis,
random 3-CNF, logistics problems and program synthesis~\cite{SangBK05,KrocSS11,ChakrabortyMV13,ChakrabortyMMV16}. For lack of space, we present results for only for a subset of the benchmarks.
All our experiments were conducted on a single core of an Intel Xeon 2.40GHz (16 cores) machine with 32GB memory and CentOS6.5 operating system.

\subsection{Quality of Approximation}

Recall that our approach is based on Equation~(\ref{eqs:ind_h}) which has not been proved.
So we would like to see whether the approximation fits the bound.
We experimented 100 times on each instance.

\begin{table*}[!htbp]
\centering
\caption{Statistical results of 100-times experiments on \texttt{STAC\_CNF} ($\epsilon = 0.8, \delta = 0.2$)}\label{table:stat_cnf82}
\begin{tabular}{p{80pt}p{30pt}p{40pt}p{86pt}p{21pt}p{21pt}p{21pt}p{21pt}}
\toprule
Instance 			& $n$	& $\#F$				& $[1.8^{-1}\#F, 1.8\#F]$					& Freq. & $\bar{t}$ (s)	& $\bar{T}$	& $\bar{Q}$	\\
\midrule
special-1			& 20	& $1.0\times 10^6$	& $[5.8\times 10^5, 1.9\times 10^6]$		& 82	& 0.3			& 12.2		& 86.7	\\
special-2			& 20	& $1$				& $[0.6, 1.8]$								& 86	& 0.6			& 12.6		& 37.6	\\
special-3			& 25	& $3.4\times 10^7$	& $[1.9\times 10^7, 6.0\times 10^7]$		& 82	& 11.2			& 11.8		& 90.1	\\
5step				& 177	& $8.1\times 10^4$	& $[4.5\times 10^4, 1.5\times 10^5]$		& 90 	& 0.1			& 11.9		& 80.5	\\
blockmap\_05\_01	& 1411	& $6.4\times 10^2$	& $[3.6\times 10^2, 1.2\times 10^3]$		& 84	& 1.1			& 12.0		& 73.8	\\
blockmap\_05\_02	& 1738	& $9.4\times 10^6$	& $[5.2\times 10^6, 1.7\times 10^7]$		& 89	& 12.7			& 11.8		& 87.7	\\
blockmap\_10\_01	& 11328	& $2.9\times 10^6$	& $[1.6\times 10^6, 5.2\times 10^6]$		& 83	& 80.3			& 12.0		& 85.0	\\
fs-01				& 32	& $7.7\times 10^2$	& $[4.3\times 10^2, 1.4\times 10^3]$		& 80	& 0.02			& 12.6		& 76.2	\\
or-50-10-10-UC-20	& 100	& $3.7\times 10^6$	& $[2.0\times 10^6, 6.6\times 10^6]$		& 77	& 7.7			& 12.0		& 86.1	\\
or-60-10-10-UC-40	& 120	& $3.4\times 10^6$	& $[1.9\times 10^6, 6.1\times 10^6]$		& 91	& 3.5			& 12.1		& 86.0	\\
\bottomrule
\end{tabular}
\vspace{-1ex}
\end{table*}

In Table~\ref{table:stat_cnf82}, column 1 gives the instance name,
column 2 the number of Boolean variables $n$,
column 3 the exact counts $\#F$, and column 4 the interval $[1.8^{-1}\#F, 1.8\#F]$.
The frequencies of approximations that lie in the interval $[1.8^{-1}\#F, 1.8\#F]$ in 100 times of experiments are presented in column 5.
The average time consumptions, average number of iterations, and average number of SAT query invocations are presented in column 6, 7 and 8 respectively,
which also indicate the advantages of our approach.


Under the dynamic stopping criterion, the counts returned by our approach should lie in an interval $[1.8^{-1}\#F, 1.8\#F]$ with probability $80\%$ for $\epsilon = 0.8$ and $\delta = 0.2$. The statistical results in Table~\ref{table:stat_cnf82} show that the frequencies are around $80$ for 100-times experiments which fit the $80\%$ probability. The average number of iterations $\bar{T}$ listed in Table~\ref{table:stat_cnf82} is smaller than the theoretical termination bound $T$ which is $22$, indicating that the dynamic stopping technique significantly improves the efficiency. In addition, the values of $\bar{T}$ appear to be stable for different instances, hinting that there exists a constant upper bound on $T$ which is irrelevant to instances.

In Section~\ref{sect:approach}, we considered a similar but different function family $\mathcal{G}$ and proved Equation~(\ref{eqs:ind_g}).
It suggests that Equation~(\ref{eqs:ind_h}) may hold on $\mathcal{H}$ for infinite $n$.
However, our approach does not insert any dummy variables to increase $n$.
Intuitively, our approach may start to fail on ``loose'' formulas, i.e., with an ``infinitesimal'' fraction of non-models.
Instance \texttt{special-1} and \texttt{special-3} are such ``loose'' formulas
where \texttt{special-1} has $2^{20}$ models with only 20 variables and \texttt{special-3} has $2^{25}-1$ models with 25 variables.
Instance \texttt{special-2} is another extreme case which only has one model.
The results in Table~\ref{table:stat_cnf82} demonstrate that \texttt{STAC\_CNF} works fine even on these extreme cases.

\begin{table*}[!htbp]
\centering
\caption{Statistical results of 100-times experiments on \texttt{STAC\_CNF} ($\epsilon = 0.2, \delta = 0.1$)}\label{table:stat_cnf21}
\begin{tabular}{p{80pt}p{30pt}p{40pt}p{86pt}p{21pt}p{21pt}p{21pt}p{21pt}}
\toprule
Instance 			& $n$	& $\#F$				& $[1.2^{-1}\#F, 1.2\#F]$					& Freq. & $\bar{t}$ (s)	& $\bar{T}$	& $\bar{Q}$	\\
\midrule
special-1			& 20	& $1.0\times 10^6$	& $[8.7\times 10^5, 1.3\times 10^6]$		& 86	& 4.0			& 179		& 1023	\\
special-2			& 20	& $1$				& $[0.8, 1.2]$								& 91	& 0.1			& 179		& 540	\\
special-3			& 25	& $3.4\times 10^7$	& $[2.8\times 10^7, 4.0\times 10^7]$		& 91	& 138			& 178		& 1029	\\
5step				& 177	& $8.1\times 10^4$	& $[6.8\times 10^4, 9.8\times 10^5]$		& 96 	& 1.9			& 190		& 1078	\\
blockmap\_05\_01	& 1411	& $6.4\times 10^2$	& $[5.3\times 10^2, 7.7\times 10^2]$		& 94	& 17.1			& 190		& 1069	\\
blockmap\_05\_02	& 1738	& $9.4\times 10^6$	& $[7.9\times 10^6, 1.1\times 10^7]$		& 87	& 281			& 193		& 1088	\\
blockmap\_10\_01	& 11328	& $2.9\times 10^6$	& $[2.4\times 10^6, 3.5\times 10^6]$		& 93	& 1371			& 180		& 1034	\\
fs-01				& 32	& $7.7\times 10^2$	& $[6.4\times 10^2, 9.2\times 10^2]$		& 91	& 0.1			& 172		& 975	\\
or-50-10-10-UC-20	& 100	& $3.7\times 10^6$	& $[3.1\times 10^6, 4.4\times 10^6]$		& 90	& 140			& 166		& 925	\\
or-60-10-10-UC-40	& 120	& $3.4\times 10^6$	& $[2.8\times 10^6, 4.1\times 10^6]$		& 92	& 66			& 167		& 949	\\
\bottomrule
\end{tabular}
\vspace{-1ex}
\end{table*}

We also considered another pair of parameters $\epsilon = 0.2, \delta = 0.1$.
Then the interval should be $[1.2^{-1}\#F, 1.2\#F]$ and the probability should be $90\%$.
Table~\ref{table:stat_cnf21} shows the results on such parameter setting.
The frequencies that the approximation lies in interval $[1.2^{-1}\#F, 1.2\#F]$ are all around or over 90 which fits the $90\%$ probability.

\begin{table*}[!htbp]
\centering
\caption{Statistical results of 100-times experiments on \texttt{STAC\_BV} ($\epsilon = 0.8, \delta = 0.2$)}\label{table:stat_bv}
\begin{tabular}{p{73pt}p{22pt}p{45pt}p{90pt}p{25pt}p{25pt}p{25pt}p{20pt}}
\toprule
Instance 		& TB.	& $\#F$				& $[1.8^{-1}\#F, 1.8\#F]$				& Freq. & $\bar{t}$ (s)	& $\bar{T}$	& $\bar{Q}$	\\
\midrule
FINDpath1		& 32	& $4.1\times 10^6$	& $[2.3\times 10^6, 7.3\times 10^6]$	& 83 	& 27.5	& 12.4		& 88.0	\\
queue			& 16	& $8.4\times 10$	& $[4.7\times 10, 1.5\times 10^2]$		& 75 	& 1.7	& 12.0		& 70.6	\\
getopPath2		& 24	& $8.1\times 10^3$	& $[4.5\times 10^3, 1.5\times 10^4]$	& 88	& 2.7	& 12.2		& 79.5	\\
coloring\_4		& 32	& $1.8\times 10^9$	& $[1.0\times 10^9, 3.3\times 10^9]$	& 76	& 51.9	& 12.0		& 96.1	\\
FISCHER2-7-fair	& 240	& $3.0\times 10^4$	& $[1.7\times 10^4, 5.4\times 10^4]$	& 79	& 149	& 11.8		& 79.8	\\
case2			& 24	& $4.2\times 10^6$	& $[2.3\times 10^6, 7.6\times 10^6]$	& 79	& 16.5	& 12.4		& 89.3	\\
case4			& 16	& $3.3\times 10^4$	& $[1.8\times 10^4, 5.9\times 10^4]$	& 87	& 2.2	& 12.5		& 85.2	\\
case7			& 18	& $1.3\times 10^5$	& $[7.3\times 10^4, 2.4\times 10^5]$	& 83	& 2.9	& 12.4		& 84.1	\\
case8			& 24	& $8.4\times 10^6$	& $[4.7\times 10^6, 1.5\times 10^7]$	& 82	& 14.4	& 12.1		& 91.1	\\
case11			& 15	& $1.6\times 10^4$	& $[9.1\times 10^3, 2.9\times 10^4]$	& 76	& 2.1	& 12.0		& 81.2	\\
\bottomrule
\end{tabular}
\vspace{-1ex}
\end{table*}

Table~\ref{table:stat_bv} similarly shows the results of 100-times experiments on \texttt{STAC\_BV}.
Its column 2 gives the sum of widths of all bit-vector variables (Boolean variable is counted as a bit-vector of width 1) instead.
The statistical results demonstrate that the dynamic stopping criterion is also promising on SMT(BV) problems.

\subsection{Performance Comparison with $(\epsilon, \delta)$-counters}

We compared our tools with \texttt{ApproxMC2} \cite{ChakrabortyMV16b} and \texttt{SMTApproxMC} \cite{ChakrabortyMMV16} which are hashing-based $(\epsilon, \delta)$-counters.
Both \texttt{STAC\_CNF} and \texttt{ApproxMC2} use \texttt{CryptoMiniSAT}~\cite{SoosNC09}, an efficient SAT solver designed for XOR clauses.
\texttt{STAC\_BV} and \texttt{SMTApproxMC} use the state-of-the-art SMT(BV) solver \texttt{Boolector}~\cite{BrummayerB09}.

\begin{figure*}[htbp]
\centering
\begin{minipage}[t]{0.45\textwidth}
\centering
\includegraphics[width=\textwidth]{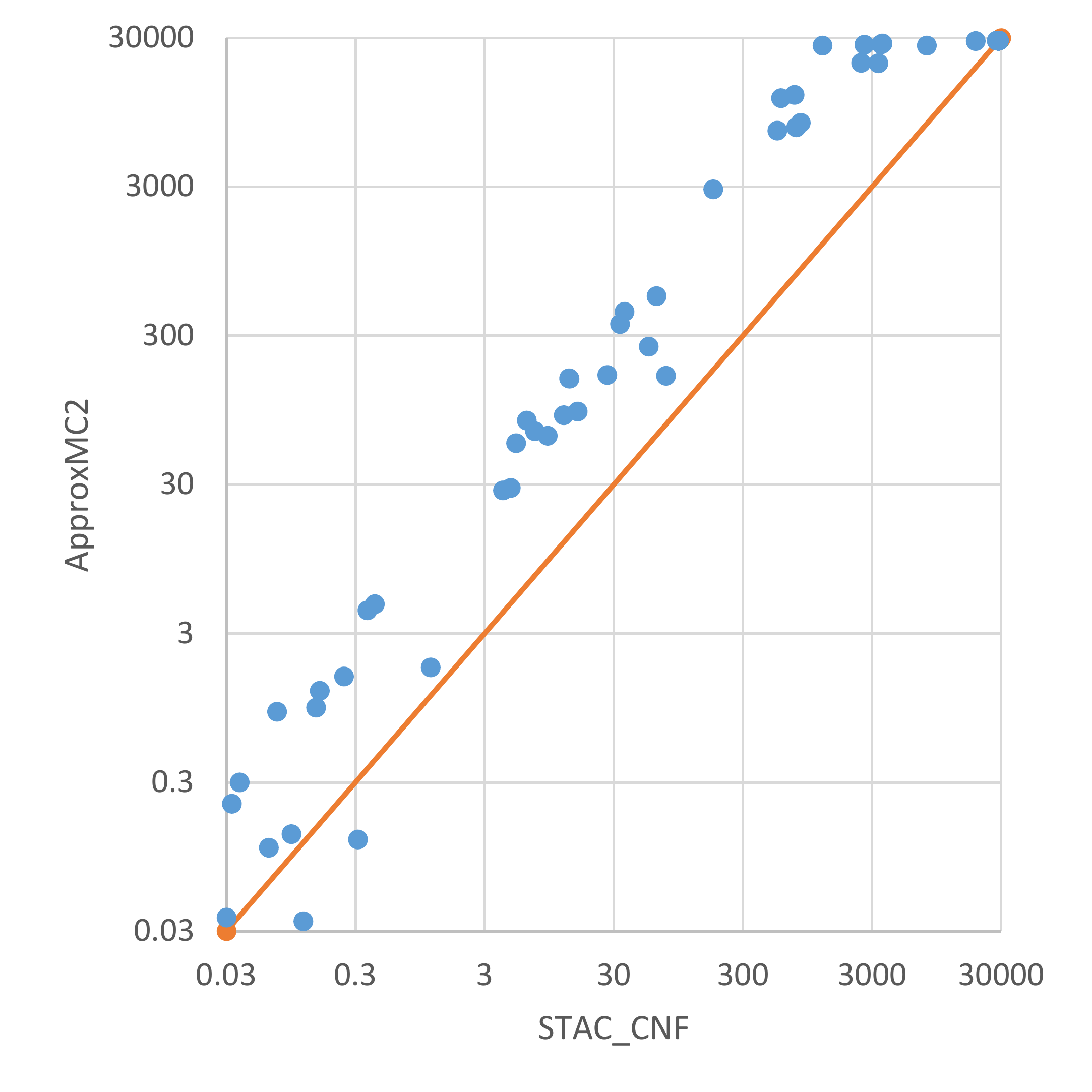}
\vspace{-6ex}
\caption{Performance comparison between \texttt{STAC\_CNF} and \texttt{ApproxMC2} with $\epsilon = 0.8, \delta = 0.2$ and 8 hours timeout}\label{fig:comp_cnf}
\end{minipage}
\hspace{2ex}
\begin{minipage}[t]{0.45\textwidth}
\centering
\includegraphics[width=\textwidth]{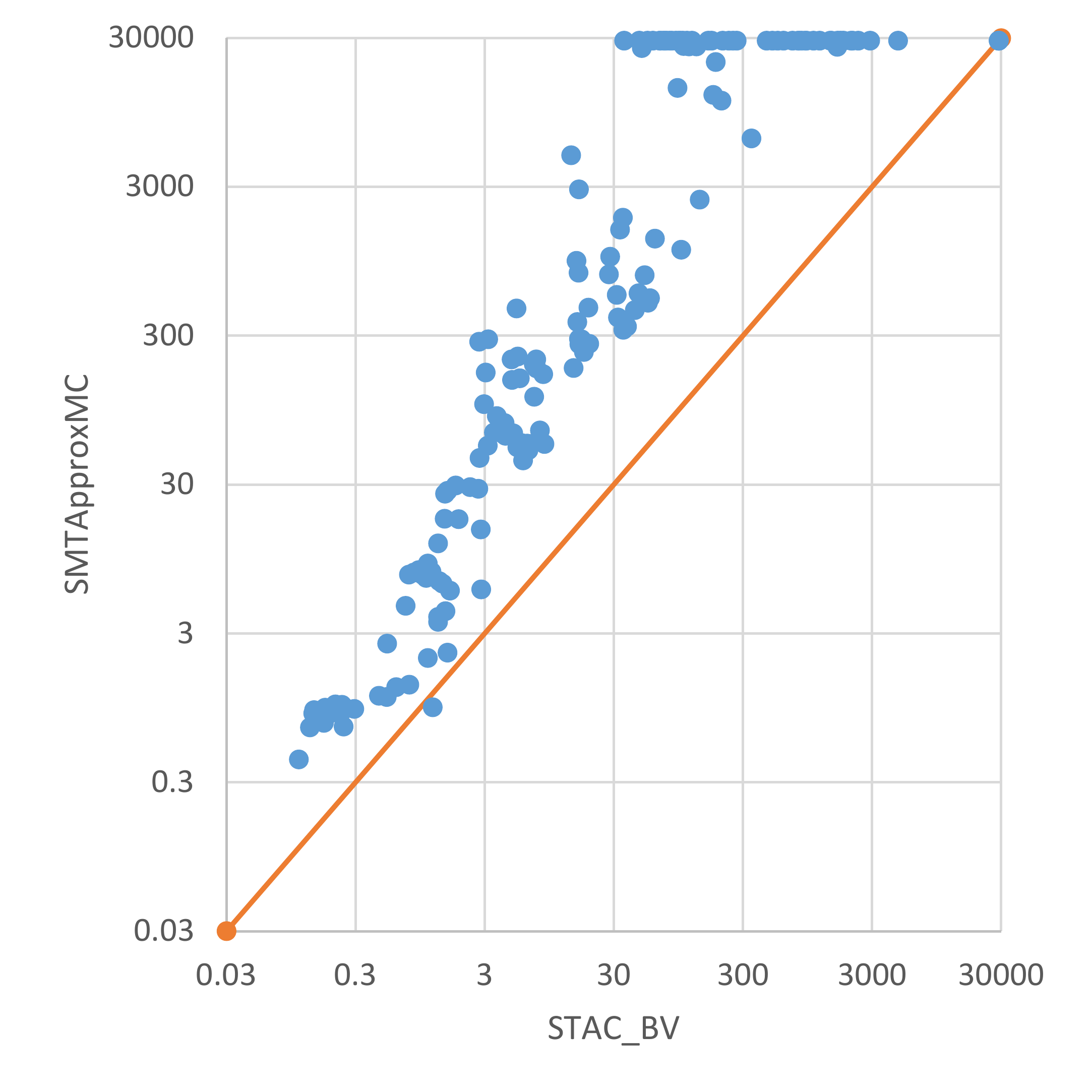}
\vspace{-6ex}
\caption{Performance comparison between \texttt{STAC\_BV} and \texttt{SMTApproxMC} with $\epsilon = 0.8, \delta = 0.2$ and 8 hours timeout}\label{fig:comp_bv}
\end{minipage}
\vspace{-1ex}
\end{figure*}

We first conducted experiments with $\epsilon = 0.8, \delta = 0.2$ which are also used in evaluation in previous works \cite{ChakrabortyMV16b,ChakrabortyMMV16}.
Figure~\ref{fig:comp_cnf} presents a comparison on performance between \texttt{STAC\_CNF} and \texttt{ApproxMC2}.
Each point represents an instance, whose  $x$-coordinate and $y$-coordinate are the running times of \texttt{STAC\_CNF} and \texttt{ApproxMC2} on this instance, respectively.
The figure is in logarithmic coordinates and demonstrates that \texttt{STAC\_CNF} outperforms \texttt{ApprxMC2} by about one order of magnitude.
Figure~\ref{fig:comp_bv} presents a similar comparison on performance between \texttt{STAC\_BV} and \texttt{SMTApproxMC},
showing that \texttt{STAC\_BV} outperforms \texttt{SMTApproxMC} by one or two order of magnitude.
Furthermore, the advantage enlarges as the scale grows.

\begin{table*}[!htbp]
\centering
\caption{Performance comparison between \texttt{STAC\_CNF} and \texttt{ApproxMC2} with different pairs of $(\epsilon, \delta)$ parameters}\label{table:comp_param}
\begin{tabular}{p{45pt}p{55pt}p{26pt}p{26pt}p{26pt}p{2pt}p{26pt}p{26pt}p{26pt}p{26pt}p{26pt}}
\toprule
\multicolumn{2}{c}{\multirow{2}{*}{\backslashbox[70pt]{$(\epsilon, \delta)$}{Instance}}}	& \multicolumn{3}{c}{blockmap}	&	
	& \multirow{2}{*}{fs-01}	& \multirow{2}{*}{5step}	& \multirow{2}{*}{ran5}	& \multirow{2}{*}{ran6}	& \multirow{2}{*}{ran7}	\\
\cline{3-5}
	&	& 05\_01	& 05\_02	& 10\_01	\\
\midrule
\multirow{2}{*}{$(0.8, 0.3)$}
	& Time Ratio	& 1.11	& 3.99	& 1.22	&	& 3.00	& 3.83	& 6.53	& 8.24	& 5.57	\\
	& \#Calls Ratio	& 22.60	& 39.02	& 17.91	&	& 19.12	& 23.11	& 22.53	& 21.28	& 23.68	\\
\hline
\multirow{2}{*}{$(0.8, 0.2)$}	
	& Time Ratio	& 1.84	& 6.16	& 2.44	&	& 2.80	& 6.05	& 9.61	& 15.41	& 7.37	\\
	& \#Calls Ratio	& 26.70	& 34.68	& 25.16	&	& 33.46	& 27.24	& 33.35	& 38.22	& 30.94	\\
\hline
\multirow{2}{*}{$(0.8, 0.1)$}	
	& Time Ratio	& 2.27	& 7.36	& 3.72	&	& 5.25	& 12.62	& 9.60	& 9.54	& 8.19	\\
	& \#Calls Ratio	& 44.88	& 48.26	& 40.01	&	& 49.40	& 43.03	& 46.12	& 44.84	& 52.63	\\
\hline
\multirow{2}{*}{$(0.4, 0.3)$}	
	& Time Ratio	& 0.75	& 1.37	& 0.42	& 	& 3.00	& 5.04	& 1.97	& 2.31	& 2.74	\\
	& \#Calls Ratio	& 17.75	& 36.20	& 14.69	& 	& 16.40	& 27.63	& 21.07	& 27.34	& 21.63	\\
\hline
\multirow{2}{*}{$(0.4, 0.2)$}	
	& Time Ratio	& 0.77	& 1.44	& 0.86	& 	& 4.50	& 7.70	& 2.82	& 1.77	& 3.02	\\
	& \#Calls Ratio	& 20.91	& 26.35	& 29.16	& 	& 26.72	& 40.66	& 26.49	& 27.82	& 28.94	\\
\hline
\multirow{2}{*}{$(0.4, 0.1)$}	
	& Time Ratio	& 1.08	& 2.57	& 1.29	& 	& 4.90	& 7.09	& 3.84	& 3.43	& 3.11	\\
	& \#Calls Ratio	& 37.16	& 46.28	& 39.40	& 	& 31.99	& 39.36	& 41.02	& 35.88	& 34.11	\\
\hline
\multirow{2}{*}{$(0.2, 0.3)$}	
	& Time Ratio	& 0.42	& 0.47	& 0.23	& 	& 5.08	& 3.79	& 1.26	& 1.14	& 1.81	\\
	& \#Calls Ratio	& 13.75	& 20.82	& 24.35	& 	& 13.37	& 19.74	& 25.20	& 19.19	& 20.06	\\
\hline
\multirow{2}{*}{$(0.2, 0.2)$}	
	& Time Ratio	& 0.57	& 0.92	& 0.26	& 	& 8.42	& 3.37	& 2.07	& 1.50	& 2.45	\\
	& \#Calls Ratio	& 21.80	& 29.62	& 25.60	& 	& 21.83	& 21.59	& 25.88	& 22.72	& 22.98	\\
\hline
\multirow{2}{*}{$(0.2, 0.1)$}	
	& Time Ratio	& 0.87	& 0.92	& 0.44	& 	& 16.69	& 3.17	& 3.61	& 2.27	& 2.60	\\
	& \#Calls Ratio	& 27.86	& 29.91	& 33.36	& 	& 34.17	& 31.58	& 40.81	& 29.01	& 29.90	\\
\bottomrule
\end{tabular}
\vspace{-1ex}
\end{table*}

Table~\ref{table:comp_param} presents more experimental results with $(\epsilon, \delta)$ parameters other than $(0.8, 0.2)$.
Nine pairs of parameters were experimented.
``Time Ratio'' represents the ratio of the running times of \texttt{ApproxMC2} to \texttt{STAC\_CNF}.
``\#Calls Ratio'' represents the ratio of the number of SAT calls of \texttt{ApproxMC2} to \texttt{STAC\_CNF}.
The results show that \texttt{ApproxMC2} gains advantage as $\epsilon$ decreases and \texttt{STAC\_CNF} gains advantage as $\delta$ decreases.
On the whole, \texttt{ApproxMC2} gains advantage when $\epsilon$ and $\delta$ both decrease.
Note that the numbers of SAT calls represent the complexity of both algorithms.
In Table~\ref{table:comp_param}, \#Calls Ratio is more stable than Time Ratio among different pairs of parameters and also different instances.
It indicates that the difficulty of NP-oracle is also an important factor of running time performance.

\subsection{Performance Comparison with Bounding and Guarantee-less Counters}

Since our approach is not a $(\epsilon, \delta)$-counter in theory,
we also compared \texttt{STAC\_CNF} with bounding counters (\texttt{SampleCount} \cite{GomesHSS07}, \texttt{MBound} \cite{GomesSS06b})
and guarantee-less counters (\texttt{ApproxCount} \cite{WeiS05}, \texttt{SampleTreeSearch} \cite{ErmonGS12}).
Table~\ref{table:comp_gl} shows the experimental results.

\begin{sidewaystable}
\centering
\caption{Performance comparison of \texttt{STAC\_CNF} with existing bounding counters and guarantee-less counters}\label{table:comp_gl}
\begin{tabular}{p{80pt}p{30pt}p{45pt} 	p{50pt}p{22pt}p{3pt}
										p{52pt}p{22pt}p{3pt}
										p{50pt}p{22pt}p{3pt}
										p{52pt}p{22pt}p{3pt}
										p{50pt}p{25pt}}
\toprule
Instance 	& $n$	& $\#F$		& \multicolumn{2}{l}{\texttt{STAC\_CNF}}	&
								& \multicolumn{2}{l}{\texttt{SampleCount}} &
								& \multicolumn{2}{l}{\texttt{MBound}}	&
								& \multicolumn{2}{l}{\texttt{ApproxCount}}	&
								& \multicolumn{2}{l}{\texttt{SampleTreeSearch}}	\\
			&		& (if known)& \multicolumn{2}{l}{($\epsilon=0.8,\delta=0.2$)}	&
								& \multicolumn{2}{l}{(99\% confidence)}	&
								& \multicolumn{2}{l}{(99\% confidence)}	&	\\
\cline{4-5}\cline{7-8}\cline{10-11}\cline{13-14}\cline{16-17}
			&		& 			& Models	& Time	&
								& L-bound	& Time	&
								& L-bound	& Time	&
								& Models	& Time	&
								& Models	& Time	\\
\midrule
blockmap\_05\_01	& 1411	& 640				& $\approx 807$					& 1 s	&
												& $\ge 22$						& 116 s	&
												& $> 64$						& 9 s	&
												& $= 640$						& 7 s	&	
												& $\approx 646$ 				& 71 s	\\
blockmap\_05\_02	& 1738	& $9.4\times 10^6$	& $\approx 7.1\times 10^6$		& 16 s	&
												& $3.6\times 10^4$				& 5 m 	&
												& $>2.1\times 10^6$				& 5 m 	&
												& $= 9.4\times 10^6$			& 13 s	&
												& $\approx 9.4\times 10^6$		& 114 s	\\
blockmap\_10\_01	& 11328	& $2.9\times 10^6$	& $\approx 2.6\times 10^6$		& 96 s	&
												& ---							& $\ge$8 h	&
												& $>5.2\times 10^5$ 			& 9 m	&
												& ---							& $\ge$8 h	&
												& $\approx 3.0\times 10^6$ 		& 62 m	\\
blockmap\_15\_01	& 33035	& ---				& $\approx 2.0\times 10^9$ 		& 41 m	&
												& ---							& $\ge$8 h	&
												& ---							& $\ge$8 h	&
												& ---							& $\ge$8 h	&
												& ---							& $\ge$8 h	\\
fs-01				& 32	& 768				& $\approx 709$					& 0.1 s &
												& $\ge 68$						& 0.2 s	&
												& $> 64$						& 2 s	&
												& $\approx 925$					& 17 s	&
												& $\approx 769$					& 0.1 s	\\
\\
\multicolumn{17}{l}{PLAN RECOGNITION}	\\
5step				& 177	& $8.1\times 10^4$	& $\approx 8.1\times 10^4$		& 0.2 s	&
												& $\ge 2.8\times 10^3$			& 4 s	&
												& $> 8.2\times 10^3$			& 6 s	&
												& $= 8.1\times 10^4$ 			& 18 s	&
												& $\approx 7.5\times 10^4$		& 1 s	\\
tire-1				& 352	& $7.3\times 10^8$	& $\approx 9.8\times 10^8$		& 64 m	&
												& $\ge 7.0\times 10^5$			& 14 s	&
												& $>6.7\times 10^7$				& 8 h	&
												& $= 7.3\times 10^8$			& 48 s	&
												& $\approx 7.6\times 10^8$ 		& 5 s	\\
tire-3				& 577	& $2.2\times 10^{11}$	& ---						& $\ge$8 h	&
												& $\ge 1.3\times 10^6$			& 49 s	&
												& ---							& $\ge$8 h	&	
												& $\approx 2.1\times 10^{11}$	& 63 s	&
												& $\approx 1.2\times 10^{11}$	& 5 s	\\
\\
\multicolumn{17}{l}{LANGFORD PROBS.}	\\
lang12				& 576	& $2.2\times 10^5$	& $\approx 3.2\times 10^5$		& 80 m	&
												& $\ge 3.6\times 10^3$			& 3 m	&
												& $>1.6\times 10^4$				& 4 h	&
												& $\approx 0$					& 111 s	&
												& $\approx 1.9\times 10^5$		& 101 m	\\
lang15				& 1024	& ---				& ---							& $\ge$8 h	&
												& $\ge 4.7\times 10^5$			& 4 m	&
												& ---							& $\ge$8 h	&	
												& $\approx 0$					& 122 s &
												& ---							& $\ge$8 h	\\
\\
\multicolumn{17}{l}{DQMR NETWORKS}	\\
or-100-20-6-UC-60	& 200	& $2.8\times 10^7$	& $\approx 3.4\times 10^7$		& 14 m	&
												& $\ge 1.1\times 10^{29}$		& 15 s	&
												& $>4.2\times 10^6$				& 6 m	&
												& $\approx 0$					& 17 s 	&
												& $\approx 2.8\times 10^7$ 		& 0.8 s	\\
or-50-10-10-UC-40	& 100	& $3.1\times 10^3$	& $\approx 3.2\times 10^3$		& 0.1 s	&
												& $\ge 2.7\times 10^2$			& 0.1 s &
												& $>5.1\times 10^2$				& 4 s	&
												& $\approx 2.1\times 10^{16}$	& 68 s &
												& $\approx 3.1\times 10^3$		& 0.5 s	\\
or-50-20-10-UC-30	& 100	& $6.8\times 10^8$	& $\approx 7.4\times 10^8$		& 35 m	&
												& $\ge 6.0\times 10^7$ 			& 0.2 s	&
												& $>1.3\times 10^8$				& 3 h 	&
												& $\approx 2.7\times 10^{16}$	& 62 s	&
												& $\approx 7.9\times 10^8$		&	0.7 s	\\
or-60-10-10-UC-30	& 120	& $6.8\times 10^7$	& $\approx 6.2\times 10^7$		& 4 m	&
												& $\ge 1.0\times 10^{17}$		& 9 s	&
												& $>1.7\times 10^7$				& 38 m	&
												& $\approx 2.4\times 10^{19}$ 	& 83 s	&
												& $\approx 6.5\times 10^7$		& 1 s	\\
or-60-5-2-UC-40		& 120	& $2.1\times 10^6$	& $\approx 1.9\times 10^6$		& 6 s	&
												& $\ge 1.0\times 10^{17}$		& 16 s	&
												& $>5.2\times 10^5$				& 199 s &
												& $\approx 2.3\times 10^{19}$ 	& 89 s	&
												& $\approx 2.1\times 10^6$		& 0.9 s	\\
or-70-10-6-UC-40	& 140	& $1.2\times 10^4$	& $\approx 7.2\times 10^3$		& 0.1 s	&
												& $\ge 1.0\times 10^{20}$		& 7 s	&
												& $>2.0\times 10^3$				& 4 s &
												& $\approx 0$ 					& 165 s	&
												& $\approx 1.2\times 10^4$		& 0.5 s	\\
or-70-5-2-UC-30		& 140	& $1.7\times 10^7$	& $\approx 4.7\times 10^7$		& 51 s	&
												& $\ge 1.0\times 10^{20}$		& 10 s	&
												& $>2.1\times 10^6$				& 11 m	&
												& $\approx 0$ 					& 165 s	&
												& $\approx 1.7\times 10^7$		& 0.8 s	\\
\\
\multicolumn{17}{l}{RANDOM 3-CNF}	\\
ran6				& 30	& $1.2\times 10^6$	& $\approx 1.9\times 10^6$		& 0.6 s	&
												& $\ge 1.1\times 10^5$			& 0.2 s	&
												& $>1.3\times 10^5$				& 11 s &
												& $\approx 8.3\times 10^5$ 		& 23 s 	&
												& $\approx 1.3\times 10^6$		& 0.2 s	\\
ran12				& 40	& $3.5\times 10^8$	& $\approx 4.2\times 10^8$		& 10 m	&
												& $\ge 3.1\times 10^7$			& 0.5 s	&
												& $>6.7\times 10^7$				& 15 m	&
												& $\approx 2.6\times 10^8$		& 23 s	&
												& $\approx 3.9\times 10^8$		& 0.3 s	\\
ran27				& 50	& $1.5\times 10^8$	& $\approx 1.2\times 10^8$		& 8 m	&
												& $\ge 1.3\times 10^7$ 			& 18 s	&
												& $>1.7\times 10^7$				& 28 m 	&
												& $\approx 5.5\times 10^7$		& 59 s 	&
												& $\approx 1.1\times 10^8$		& 0.3 s	\\
ran44				& 60	& $1.1\times 10^6$	& $\approx 1.9\times 10^6$		& 6 s	&
												& $\ge 9.5\times 10^4$			& 8 s	&
												& $>1.3\times 10^5$				& 52 s	&
												& $\approx 3.3\times 10^5$		& 139 s	&
												& $\approx 9.1\times 10^5$		& 0.5 s	\\
\bottomrule
\end{tabular}
\vspace{-1ex}
\end{sidewaystable}

For \texttt{SampleCount}, we used $\alpha = 2$ and $t = 3.5$ so that $\alpha t = 7$, giving a correctness confidence of $1-2^{-7} = 99\%$.
The number of samples per variable setting, $z$, was chosen to be 20.
Our results show that the lower-bound approximated by \texttt{SampleCount} is smaller than exact count $\#F$ by one or more orders of magnitude.
We tried larger $z$, such as $z = 100$ and $z = 1000$, but still failed to obtain a lower-bound larger than $\#F/10$.
Moreover, there are some wrong approximations on DQMR networks problems,
e.g., \texttt{or-100-20-6-UC-60} only has $2.8\times 10^7$ models but \texttt{SampleCount} returns a lower-bound $\ge 1.1\times 10^{29}$.
\texttt{SampleCount} is more efficient on Langford problems and random 3-CNF problems, but weak on problems with a large number of variables, such as \texttt{blockmap} problems.

For \texttt{MBound}, we used $\alpha = 1$ and $t = 7$ so that $\alpha t = 7$, also giving a correctness confidence of $1-2^{-7} = 99\%$.
\texttt{MBound} also employs a family of XOR hashing function which is similar to the function used by our approach.
The size of XOR constraints $k$ should be no more than half of the number of variables $n$, i.e., $k \le n/2$.
We found that XOR constraints start to fail as $k << n/2$.
So in our experiments, $k$ was chosen to be close to $n/2$.
Since \texttt{MBound} can only check the bound and may return failure as the bound too close to the exact count,
we implemented a binary search to find the best lower-bound verified by \texttt{MBound}.
The results in Table~\ref{table:comp_gl} are the best lower-bounds and the running times of the whole binary search procedure.
Though the lower-bounds are better than \texttt{SampleCount}, they are still around $\#F/10$.
Similar to our approach, the running times of \texttt{MBound} are also quite relevant to the size of $\#F$.

For \texttt{ApproxCount}, we manually increased the value of ``cutoff'' as \texttt{ApproxCount} required.
Note that \texttt{ApproxCount} calls exact model counter \texttt{Cachet} \cite{SangBBKP04} and \texttt{Relsat} \cite{BayardoS97} after formula simplifications,
so it sometimes returns the exact counts, such as \texttt{blockmap\_05\_01}, \texttt{blockmap\_05\_02}, \texttt{5step} and \texttt{tire-1}.
On Langford problems and DQMR networks problems, wrong approximations were provided.
On other instances, the results show that \texttt{STAC\_CNF} usually outperforms \texttt{ApproxCount}.

For \texttt{SampleTreeSearch}, we used its default setting about the number of samples, which is a constant.
The results show that it is very efficient and provides good approximations.
Our approach only outperforms \texttt{SampleTreeSearch} on \texttt{blockmap} problems which consist of a large number of variables.
However, there is a lack of analysis on the accuracy of the approximation of \texttt{SampleTreeSearch},
i.e., no explicit relation between the number of samples and the accuracy.

\section{Conclusion}\label{sect:conclude}
In this paper, we propose a new hashing-based approximate algorithm with dynamic stopping criterion.
Our approach has two key strengths: it requires only one satisfiability query for each cut, and it terminates once meeting the theoretical guarantee of accuracy.
We implemented prototype tools for propositional logic formulas and SMT(BV) formulas. Extensive experiments demonstrate that our approach is efficient and promising.
Despite that we are unable to prove the correctness of Equation~(\ref{eqs:ind_h}), the experimental results fit it quite well.
This phenomenon might be caused by some hidden properties of the hash functions.
To fully understand these functions and their correlation with the model count of the hashed formula might be an interesting problem to the community.
In addition, extending the idea in this paper to count solutions of other formulas is also a direction of future research.


\bibliographystyle{plain}
\bibliography{references}

\end{document}